\theoremstyle{plain}
\newtheorem{theorem}{Theorem}[section]
\newtheorem{lemma}[theorem]{Lemma}
\newtheorem{definition}[theorem]{Definition}
\newtheorem{proposition}[theorem]{Proposition}
\newtheorem{assumption}[theorem]{Assumption}
\newtheorem{fact}[theorem]{Fact}
\DeclareRobustCommand{\eqfit}[1]{\resizebox{\linewidth}{!}{$\displaystyle #1$}}
\newcommand*{\RN}[1]{\expandafter\@slowromancap\romannumeral #1@}
\definecolor{codegreen}{HTML}{478058}
\definecolor{codegray}{rgb}{0.5,0.5,0.5}
\definecolor{codepurple}{HTML}{4F5E80}
\definecolor{backcolour}{rgb}{0.95,0.95,0.95}
\lstdefinestyle{mystyle}{
  backgroundcolor=\color{backcolour},
  commentstyle=\color{codegreen},
  keywordstyle=\color{magenta},
  numberstyle=\tiny\color{codegray},
  stringstyle=\color{codepurple},
  basicstyle=\ttfamily\scriptsize,
  breaklines=true,
  keepspaces=true,
  showstringspaces=false,
  tabsize=2
}
\definecolor{mybluetitle}{HTML}{4B527E}
\definecolor{SFTcolor}{HTML}{F0F0F0}
\definecolor{REDIcolor}{HTML}{E6FFE6}
\definecolor{NeutralColor}{HTML}{FFFFFF}
\newtcolorbox{promptbox}[2][]{
  enhanced, breakable, center title,
  left*=0pt, right*=0pt,
  boxsep=2pt, left=5pt, right=5pt,
  colback=backcolour,
  fonttitle=\bfseries\rmfamily,
  fontupper=\scriptsize,
  title={\footnotesize\strut #2},
  #1
}
\newtcolorbox{insightbox}[2][]{
  enhanced, breakable,
  skin first=enhanced, skin middle=enhanced, skin last=enhanced,
  left*=0pt, right*=0pt,
  boxsep=2pt, left=4pt, right=4pt, top=4pt, bottom=4pt,
  colback=mybluetitle!10!white,
  colframe=mybluetitle!90!white,
  coltitle=white,
  fonttitle=\bfseries,
  title={#2},
  #1
}
\newcommand{\wh}{\widehat}
\newcommand{\wt}{\widetilde}
\newcommand{\ov}{\overline}
\newcommand{\un}{\underline}
\newcommand{\R}{\mathbb{R}}
\newcommand{\loss}{{\mathcal{L}}}
\DeclareMathOperator*{\E}{\mathbb{E}}
\def\onedot{\ifx\@let@token.\else.\null\fi\xspace}
\newcommand{\xpar}[1]{\noindent\textbf{#1}\hspace{0.5em}}
\title{Harnessing Negative Signals:\\Reinforcement Distillation from Teacher Data for LLM Reasoning} 
\author{
  \vspace{-5pt}\\
  \textbf{Shuyao Xu$^{1,2,}$\thanks{Work done during an internship at INF AI (INFLY TECH (Shanghai) Co., Ltd).}\quad Cheng Peng$^{2}$\quad Jiangxuan Long$^{2,*}$\quad Weidi Xu$^2$\quad Wei Chu$^{2}$\quad Yuan Qi$^{2}$}\vspace{3pt} \\
  $^1$National University of Singapore\quad\quad$^2$INF AI\vspace{3pt} \\
    \texttt{\small shuyao@u.nus.edu, wdxu@inftech.ai}\vspace{8pt}  \\
  Code:~\, \url{https://github.com/Tim-Siu/reinforcement-distillation} \\
  \vspace{-5pt}
}
\begin{document}

\maketitle
\begin{abstract}
Recent advances in model distillation show that data from advanced reasoning models can effectively train smaller student models. However, standard practices discard incorrect reasoning traces—valuable, yet underutilized data. This paper addresses the critical question: \textit{How can both positive and negative distilled reasoning traces be effectively leveraged to maximize LLM reasoning performance in an offline setting?} We employ a two-stage training recipe: first, Supervised Fine-Tuning (SFT) on positive traces, followed by a refinement stage using both positive and negative traces. We find that a simple REINFORCE-style objective, which we term the Reinforcement Distillation (REDI) objective, outperforms established preference optimization methods like DPO and SimPO in this distillation context. Our empirical evaluations demonstrate the effectiveness of this approach. Notably, our \texttt{Qwen-REDI-1.5B} model, trained on just 131k traces from the open Open-R1 dataset, achieves an 83.1\% score on MATH-500. Its performance matches that of \texttt{DeepSeek-R1-Distill-Qwen-1.5B}, a model trained on 800k proprietary data. This result showcases the remarkable data efficiency of utilizing previously discarded negative traces.
\end{abstract}
\section{Introduction}
Recent breakthroughs with large reasoning models, such as DeepSeek-R1 and OpenAI's o1, have demonstrated remarkable capabilities in complex reasoning tasks \citep{deepseekai2025deepseekr1incentivizingreasoningcapability, openai2024openaio1card}. Techniques like test-time scaling facilitate longer Chain-of-Thought (CoT) processes and induce sophisticated reasoning behaviors, enhancing model performance in domains like mathematics. For base models initially lacking such advanced reasoning, two primary methods are employed to cultivate these abilities. The first, large-scale reinforcement learning (RL), directly applies RL algorithms to the base model, continually optimizing it through online exploration \citep{deepseekai2025deepseekr1incentivizingreasoningcapability,tinyzero,zeng2025simplerlzooinvestigatingtamingzero}. However, RL approaches typically demand strong base models to achieve their full potential and are computationally intensive \citep{yue2025limit-of-rlvr, deepseekai2025deepseekr1incentivizingreasoningcapability}. In contrast, distillation—learning from reasoning traces (e.g., CoT) generated by large ``teacher`` models—emerges as an attractive alternative for smaller, more efficient student models. This approach offers a practical and cost-effective pathway to extend their reasoning capabilities \citep{openthoughts,deepseekai2025deepseekr1incentivizingreasoningcapability}. Benefiting from open datasets distilled from powerful reasoning models like DeepSeek-R1 \citep{openthoughts,bespoke_stratos,openr1}, openly post-trained models have shown strong performance \citep{openthoughts,ye2025limoreasoning,muennighoff2025s1simpletesttimescaling,wen2025lightr1curriculumsftdpo}, although a performance gap remains compared to their closed-data counterparts.
\begin{figure}[t]
\begin{center}
\includegraphics[width=0.98\linewidth]{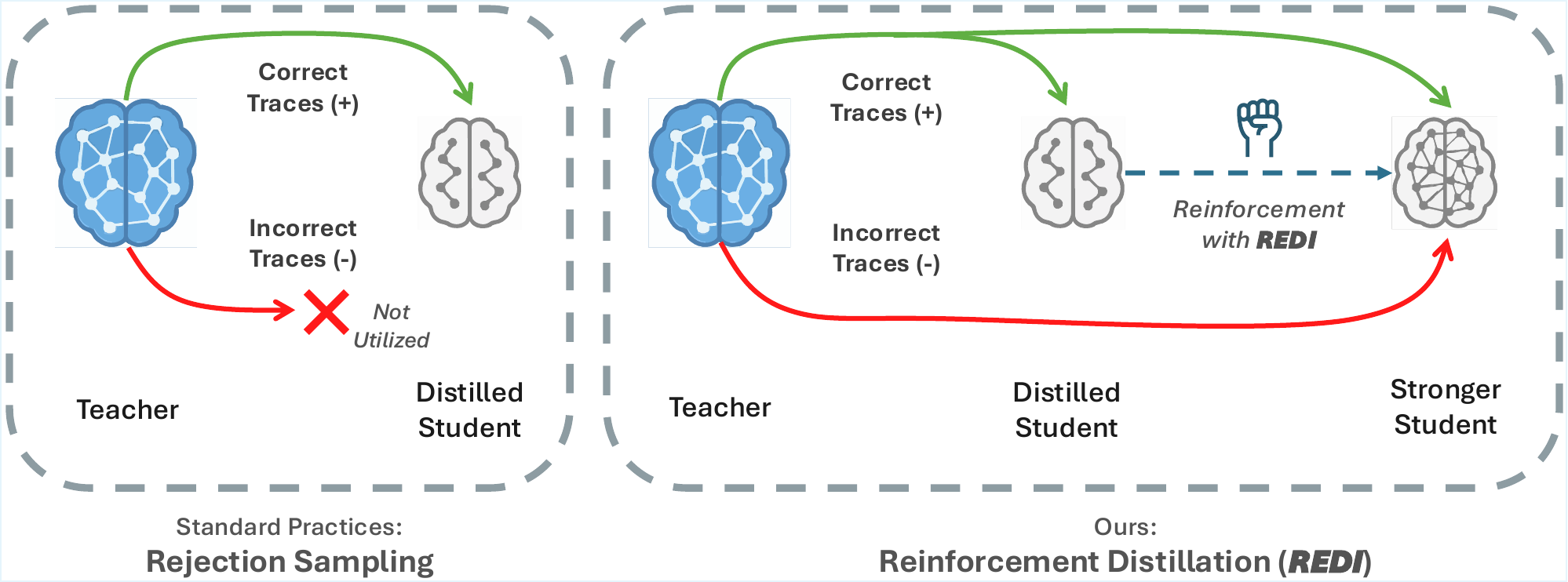}
\end{center}
\caption{\small
\textbf{Standard distillation practices via Rejection Sampling \textit{vs.} our proposed Reinforcement Distillation (REDI).}
Our REDI recipe can utilize previously discarded incorrect reasoning traces generated by the teacher and yield stronger distilled models.
}
\label{fig:intro}
\vspace{-10pt}
\end{figure}

\begin{table*}[t]
    \centering
    \caption{\small \textbf{Model Performance Comparison (pass@1 over 16 samples) across reasoning benchmarks.} Our \texttt{Qwen-REDI-1.5B}, trained with the REDI recipe on just 131k open data points, achieves the highest average score. This performance surpasses \texttt{DeepSeek-R1-Distill-Qwen-1.5B} (trained on 800k proprietary data) \citep{deepseekai2025deepseekr1incentivizingreasoningcapability}, demonstrating REDI's \textbf{remarkable data efficiency}. REDI enhances reasoning by effectively utilizing both positive and negative distilled examples. Values in \textbf{bold} indicate the best performance in each column. \textsuperscript{*}Officially reported pass@1 results.}
        \label{tab:pass1_5ep}
    \resizebox{\textwidth}{!}{
    \begin{tabular}{lcccccc}
        \toprule
        Model & MATH-500 & AIME24 & AMC23 & Minerva & OlympiadBench & Avg. \\
        \midrule
        \rowcolor{NeutralColor} Qwen2.5-Math-1.5B-Instruct\textsuperscript{*} & 75.8 & 10.0 & 60.0 & \textbf{29.4} & 38.1 & 42.7 \\
        \rowcolor{NeutralColor} DeepSeek-R1-Distill-Qwen-1.5B & \textbf{83.2 \tiny{$\pm$0.2}} & \textbf{28.3 \tiny{$\pm$1.0}} & 62.1 \tiny{$\pm$0.8} & 26.0 \tiny{$\pm$0.4} & 43.1 \tiny{$\pm$0.3} & 48.6 \tiny{$\pm$0.3} \\
        \rowcolor{SFTcolor} Qwen-SFT-1.5B-5ep (SFT Baseline) & 80.4 \tiny{$\pm$0.3} & 21.9 \tiny{$\pm$1.1} & 57.5 \tiny{$\pm$0.6} & 27.5 \tiny{$\pm$0.4} & 41.5 \tiny{$\pm$0.3} & 45.8 \tiny{$\pm$0.3} \\
        \rowcolor{REDIcolor} \textbf{Qwen-REDI-1.5B} & 83.1 \tiny{$\pm$0.3} & 28.1 \tiny{$\pm$1.1} & \textbf{62.4 \tiny{$\pm$0.6}} & 28.8 \tiny{$\pm$0.3} & \textbf{45.2 \tiny{$\pm$0.2}} & \textbf{49.5 \tiny{$\pm$0.3}} \\
        \bottomrule
    \end{tabular}}
    \end{table*}

However, current distillation methodologies predominantly rely on rejection sampling, which involves leveraging only positive\footnote{We use ``positive" interchangeably with ``correct", and ``negative" interchangeably with ``incorrect".} reasoning traces---those whose final answers are verified. This practice means that negative traces, despite the significant computational effort invested in their generation, are typically underutilized. We hypothesize that these negative traces contain vital insights into common pitfalls and nuanced errors from which smaller models could learn, thereby further unlocking the potential of distilled data. This leads to the central research question we address:

\textit{How can we effectively leverage both positive and negative distilled reasoning traces to maximize LLM reasoning performance with a fixed distilled open dataset?}

To address this challenge, we first investigate the application of established preference optimization methods, such as Direct Preference Optimization (DPO) \citep{rafailov2024directpreferenceoptimizationlanguage} and SimPO \citep{meng2024simpo}, to this offline distillation setting. Our analysis reveals a critical \textbf{performance-stability trade-off}: while the Kullback-Leibler (KL) divergence penalty $\beta$ is essential for stable training, it simultaneously constrains the model's peak achievable performance\footnote{Performance refers to the test-time accuracy of the best checkpoint in a training run.}. This discovery motivated a deeper analysis, where we found that in the $\beta \to 0$ limit, these complex objectives converge to a simpler, reference-free and REINFORCE-style objective. This insight forms the basis of our approach. We propose \textbf{Re}inforcement \textbf{Di}stillation (REDI), which adopts this performant but unstable objective and introduces a \textbf{asymmetric weighting} scheme. By down-weighting the gradient from negative traces, REDI restores training stability without sacrificing the performance benefits, offering a simple and effective method for learning from both positive and negative signals.

Our key contributions are:
\begin{enumerate}[topsep=1pt,itemsep=2pt,leftmargin=20pt]
    \item We provide the first systematic study on the utilization of both correct and incorrect distilled reasoning traces. We identify and analyze a \textbf{performance-stability trade-off} inherent in the KL regularization of methods like DPO, demonstrating that it limits peak performance in this practical setting.
    \item Motivated by this analysis, we propose the \textbf{Reinforcement Distillation (REDI) objective}, a simple, asymmetrically weighted, and REINFORCE-style loss function. REDI is designed to capture the high-performance potential of the $\beta \to 0$ limit of DPO while mitigating the associated training instability, providing a simpler and more effective alternative.
    \item We empirically demonstrate that our two-stage recipe of SFT combined with REDI training consistently outperforms both Rejection Sampling SFT and SFT combined with DPO/SimPO. Our \texttt{Qwen-REDI-1.5B} model achieves performance comparable to models trained on much larger proprietary datasets, showcasing the data efficiency of our recipe.
\end{enumerate}

The remainder of this paper is organized as follows: Section~\ref{sec:methodology} details the Reinforcement Distillation methodology. Section~\ref{sec:setup} describes the experimental setup. Section~\ref{sec:results} presents our results and analysis. Section~\ref{sec:related_work} discusses related work, and Section~\ref{sec:conclusion} concludes the paper.
\section{Methodology}
\label{sec:methodology}

\subsection{Problem Setting and Data}
\label{subsec:problem_setting_data}

We operate in an offline distillation setting with a fixed dataset collected via a common distillation pipeline. The dataset originates from a set of problems, each denoted by $x$. For each problem $x$, a capable ``teacher" model is employed to generate reasoning traces. The generation process for a specific problem $x$ continues until a correct reasoning trace, $y_w$, is successfully produced. During these attempts, incorrect traces, $y_l$, might also be generated before $y_w$ is obtained.

From these generated traces, we construct two distinct datasets for our two-stage training recipe:
\begin{enumerate}[topsep=1pt, itemsep=1pt, leftmargin=20pt]
    \item \textbf{Positive Traces Dataset ($\mathcal{D}_{\text{SFT}}$):} This dataset comprises all pairs $(x, y_w)$, where $y_w$ is a correct reasoning trace generated by the teacher for problem $x$.
    \item \textbf{Preference Pairs Dataset ($\mathcal{D}_{\text{Pref}}$):} This dataset is constructed from the subset of problems $x$ for which at least one incorrect trace was generated before the correct trace $y_w$ was obtained. For each such problem $x$, we form a preference tuple $(x, y_w, y_l)$ by pairing its correct trace $y_w$ with \textit{one} selected incorrect trace $y_l$ generated for the same problem. This selection strategy is adopted for simplicity and aligns with observations from datasets like Open-R1 \citep{openr1}, where most problems that have negative examples feature only one such instance.
\end{enumerate}

Our overall objective is to train a student LLM, $\pi_\theta$, to maximize its reasoning performance by effectively leveraging all information within the pre-collected $\mathcal{D}_{\text{SFT}}$ and $\mathcal{D}_{\text{Pref}}$ datasets.

\subsection{The Reinforcement Distillation Recipe}

\subsubsection{Stage 1: Supervised Fine-Tuning (SFT) on Positive Traces}
The first stage involves standard Supervised Fine-Tuning (SFT) of the base LLM on the $\mathcal{D}_{\text{SFT}}$ dataset, which contains only positive (correct) reasoning traces $(x, y_w)$. The SFT objective is to maximize the likelihood of generating the correct trace $y_w$ given the problem $x$:
\begin{equation}
    \loss_{\text{SFT}}(\theta) = - \E_{(x, y_w) \sim \mathcal{D}_{\text{SFT}}} \left[ \log \pi_\theta(y_w|x) \right].
    \label{eq:sft_loss}
\end{equation}
This initial SFT stage serves several key purposes. Primarily, it adapts the base model to the specific style and format of the reasoning traces. Furthermore, it provides a strong initial policy, denoted as $\pi_\text{SFT}$, which can subsequently serve as a reference for methods like DPO or as the starting point for our REDI objective in the second stage. Finally, this stage establishes a baseline performance comparable to traditional SFT-only pipelines (i.e., training solely on positive examples), allowing us to quantify the gains achieved by later incorporating negative examples.

\subsubsection{Stage 2: Reinforcement with Positive and Negative Traces}
The second stage aims to further refine the model obtained from Stage 1 by leveraging the negative signals encoded in $\mathcal{D}_{\text{Pref}}$, which contains pairs of positive ($y_w$) and negative ($y_l$) traces for the same problem $x$.

\paragraph{Preliminary study.}
To contextualize our REDI objective, we first briefly review established preference optimization methods such as DPO \citep{rafailov2024directpreferenceoptimizationlanguage} and SimPO \citep{meng2024simpo}.

DPO optimizes the policy $\pi_\theta$ to align with human or model preferences while regularizing its deviation from a reference policy $\pi_\text{ref}$ (typically $\pi_\text{SFT}$ from Stage 1). Its loss function is:
\begin{equation}
    \eqfit{\loss_{\text{DPO}}(\theta; \pi_\text{ref}) = - \E_{(x, y_w, y_l) \,\sim\, \mathcal{D}_{\text{Pref}}}\!\left[\, \log \sigma \Big( \beta \big( \log \tfrac{\pi_\theta(y_w\mid x)}{\pi_\text{ref}(y_w\mid x)} - \log \tfrac{\pi_\theta(y_l\mid x)}{\pi_\text{ref}(y_l\mid x)} \big) \Big) \right]}
    \label{eq:dpo_loss}
\end{equation}
where $\sigma(\cdot)$ is the sigmoid function. The hyperparameter $\beta$ controls the strength of an implicit KL divergence penalty against $\pi_\text{ref}$, where larger $\beta$ values imply stronger regularization.

SimPO offers a reference-free alternative that incorporates sequence length normalization and an explicit margin $\gamma$:
\begin{equation}
    \eqfit{\loss_{\text{SimPO}}(\theta) = - \E_{(x, y_w, y_l) \,\sim\, \mathcal{D}_{\text{Pref}}}\!\left[\, \log \sigma \Big( \beta \big( \tfrac{\log \pi_\theta(y_w\mid x)}{|y_w|} - \tfrac{\log \pi_\theta(y_l\mid x)}{|y_l|} \big) - \gamma \Big) \right]}
    \label{eq:simpo_loss}
\end{equation}
Here, $|y|$ denotes the length (e.g., number of tokens) of sequence $y$. Similarly, in SimPO, higher values of $\beta$ act as a regularizer, leading to more stable training.

As empirically demonstrated in Section~\ref{sec:dpo_tradeoff}, while stronger regularization (e.g., higher $\beta$ in DPO or SimPO) can enhance training stability and permit larger gradient steps, it often results in lower peak model performance.

\paragraph{Towards a regularization-free objective.}
The observed trade-off between performance and stability associated with $\beta$ in methods like DPO and SimPO motivates exploring objectives that minimize or eliminate such explicit regularization. As detailed in Appendix~\ref{sec:appendix_simpo_relationship_derivation}, considering the $\beta \to 0$ limit of preference optimization objectives like SimPO yields the following simplified, regularization-free and REINFORCE-style \citep{reinforce} loss function (to be minimized):
\begin{equation}
    \eqfit{\loss_{\text{symm}}(\theta) = \E_{(x, y_w, y_l)\sim \mathcal{D}_{\text{Pref}}} \left( - \tfrac{\log \pi_\theta(y_{w}\mid x)}{|y_{w}|} + \tfrac{\log \pi_\theta(y_{l}\mid x)}{|y_{l}|} \right)}
    \label{eq:symm_loss}
\end{equation}
As empirically demonstrated in Section~\ref{subsec:redi_experiments}, this symmetric, reference-free objective (Eq.~\eqref{eq:symm_loss}) can achieve performance comparable to meticulously tuned DPO or SimPO, offering reduced hyperparameter tuning. Nevertheless, the tension between performance and stability persists: careful learning rate tuning remains crucial, as larger learning rates, while potentially accelerating learning and improving transient performance, often lead to early training collapse.

\paragraph{The REDI objective: asymmetric weighting for stability and performance.}
During experiments with DPO, SimPO, and the symmetric objective $\loss_{\text{symm}}$ (Section~\ref{sec:results}), we observed frequent early training collapses when learning rates were inadequately tuned. Collapse manifests as a rapid decrease in the likelihood of both positive ($y_w$) and negative ($y_l$) responses, accompanied by declining task accuracy. Recent studies attribute this instability to unintended side effects of off-policy gradients \citep{yan20253dpropertiesidentifyingchallengesdpo, razin2025unintentional, ren2025learning}. Specifically, gradient updates penalizing negative responses may inadvertently suppress semantically similar \textit{positive} responses, leading to degenerate solutions. Heuristic mitigations include auxiliary SFT losses or asymmetric $\beta$ tuning \citep{pang2024iterative,yan20253dpropertiesidentifyingchallengesdpo}.

Inspired by these insights, we propose \textbf{asymmetric weighting} for the simplified objective (Eq.~\eqref{eq:symm_loss}). By down-weighting gradients from negative traces, we preserve stability while maximizing peak performance.

The \textbf{REDI objective}, central to the second stage of our recipe, refines the model using an asymmetrically weighted, REINFORCE-style loss. The REDI loss to be minimized is defined as:
\begin{equation}
    \eqfit{\loss_{\text{REDI}}(\theta) = \E_{(x, y_w, y_l)\sim \mathcal{D}_{\text{Pref}}} \left[ -\tfrac{\log \pi_\theta(y_w\mid x)}{|y_w|} + \alpha \cdot \tfrac{\log \pi_\theta(y_l\mid x)}{|y_l|} \right]}
    \label{eq:redi_loss_expected}
\end{equation}
where $\alpha \in [0, 1]$ controls the penalty strength for negative traces:
\begin{itemize}[topsep=1pt,itemsep=2pt,leftmargin=20pt]
    \item $\alpha = 0$: Reduces to SFT on positive traces (ignores negatives).
    \item $\alpha = 1$: Recovers the symmetric objective (Eq.~\eqref{eq:symm_loss}).
\end{itemize}

The REDI objective, when optimized using gradient descent with an appropriate learning rate schedule (such as the one in Section~\ref{sec:setup}), is amenable to standard convergence analysis. Under typical $L$-smoothness assumptions for the loss function, this optimization process is guaranteed to converge to a stationary point. Further details and a formal proof are provided in Appendix~\ref{sec:appendix_convergence}. The asymmetric weighting ($\alpha < 1$) moderates gradient contributions from positive and negative samples, preventing collapse while maintaining aggressive learning dynamics.
\section{Experimental Setup}
\label{sec:setup}

\subsection{Data Curation}
\label{subsec:base_model_datasets}

Following the data pipeline described in Section~\ref{subsec:problem_setting_data}, we derived two datasets from the OpenR1-Math-Raw corpus \citep{openr1}; the \texttt{cn\_k12} subset was excluded due to its lower relative difficulty. The OpenR1-Math-Raw corpus provides two labels for correctness: one from the Llama judge and one from Math-Verify \citep{Kydlicek_Math-Verify_Math_Verification}. A response was considered correct if both labels were ``True"; otherwise, it was considered incorrect. More details are discussed in Appendix~\ref{subsec:appendix_data}.

The two datasets were constructed as follows:
\begin{itemize}[topsep=1pt,itemsep=1pt,leftmargin=15pt]
\item \textbf{Positive Traces Dataset ($\mathcal{D}_{\text{SFT}}$):} This dataset contains 78k problem-solution pairs $(x, y_w)$, where $y_w$ represents a correct reasoning trace. It was used for SFT in Stage 1.
\item \textbf{Preference Pairs Dataset ($\mathcal{D}_{\text{Pref}}$):} This dataset consists of 53k triplets $(x, y_w, y_l)$, where $y_w$ is a correct trace and $y_l$ is an incorrect trace for the same problem $x$. It was utilized in Stage 2.
\end{itemize}

\subsection{Training Configuration}
\label{subsec:sft_staged_training}
\paragraph{Stage 1 Configuration:}
In the first stage, we establish strong SFT baselines by fine-tuning the base \texttt{Qwen2.5-Math-1.5B} model on the $\mathcal{D}_{\text{SFT}}$ dataset. Two SFT baselines were prepared:
\begin{itemize}[topsep=1pt,itemsep=1pt,leftmargin=15pt]
    \item \texttt{Qwen-SFT-1.5B-3ep}: This model was trained for 3 epochs on $\mathcal{D}_{\text{SFT}}$. It served as the initial checkpoint for our comparative studies involving DPO, SimPO, and various REDI configurations.
    \item \texttt{Qwen-SFT-1.5B-5ep}: Observing continued SFT performance improvement beyond 3 epochs (as shown in Section~\ref{sec:sft_results}), this model was trained for 5 epochs on $\mathcal{D}_{\text{SFT}}$. This stronger SFT variant was used as the starting point for training our final \texttt{Qwen-REDI-1.5B} model.
\end{itemize}
For this SFT stage, all models were trained using the AdamW optimizer \citep{loshchilov2019decoupledweightdecayregularization} with a batch size of 128. The learning rate schedule featured a linear warmup for the initial 10\% of total training steps, followed by a linear decay to zero.
\paragraph{Stage 2 Configuration:}
The second stage involves further refining the SFT-tuned models using the $\mathcal{D}_{\text{Pref}}$ dataset, which contains preference pairs $(x, y_w, y_l)$. We applied the DPO, SimPO, and our proposed REDI objectives to the SFT checkpoints from Stage 1. All preference tuning methods were trained for 1 epoch over the $\mathcal{D}_{\text{Pref}}$ dataset. Similar to Stage 1, the AdamW optimizer and the same learning rate schedule (10\% warmup, then linear decay) were used. The batch size for this stage was 32. Specific hyperparameter settings for DPO (e.g., $\beta$ values, learning rates), SimPO (e.g., $\beta, \gamma$ values, learning rates), and REDI (e.g., $\alpha$ values, learning rates) were carefully tuned, with detailed ranges and chosen values provided in Appendix~\ref{subsec:appendix_hyperparam}.

\subsection{Evaluation Protocol}
\label{subsec:evaluation}
During all evaluations, generated samples were decoded using a temperature of $0.6$, Top P sampling with $p=0.95$, and a maximum generation length of $32,768$ tokens.

\xpar{Protocols:} We utilized two distinct configurations for evaluating model performance:
\begin{itemize}[topsep=1pt,itemsep=1pt,leftmargin=15pt]
    \item \textit{Intermediate Evaluations:} These evaluations, used for hyperparameter tuning, performance plotting, and ablation studies, were conducted using LightEval \citep{lighteval} on the MATH-500 benchmark \citep{lightman2023lets}. Performance was measured as pass@1, averaged over 4 generated samples per problem.
    \item \textit{Final Model Evaluations:} These evaluations, presented in comparison tables (e.g., Table~\ref{tab:pass1_5ep}), were performed using the DeepScaleR/rllm \citep{deepscaler2025} codebase on the mathematics benchmarks MATH-500, AIME24, AMC23, Minerva and out-of-distribution STEM benchmark OlympiadBench \citep{lewkowycz2022solvingquantitativereasoningproblems,he-etal-2024-olympiadbench}. Performance was measured as pass@1 (averaged over 16 generated samples) per problem for Tables~\ref{tab:pass1_5ep} and~\ref{tab:pass1_3ep}, and pass@16 for discussions in Section~\ref{sec:redi_online_rl_potential}.
\end{itemize}

\xpar{Reporting and SEM Calculation:}
The pass@k scores represent the proportion of problems for which at least one of $k$ generated samples is correct. When reporting pass@1 for main results, we also include standard error of the mean (SEM). See Appendix~\ref{subsec:appendix_sem} for the calculation of SEM.
\section{Results and Analysis}
\label{sec:results}
\subsection{Performance Limits of SFT-Only Training}
\label{sec:sft_results}
\begin{figure}[t]
        \centering
    \includegraphics[width=0.7\linewidth]{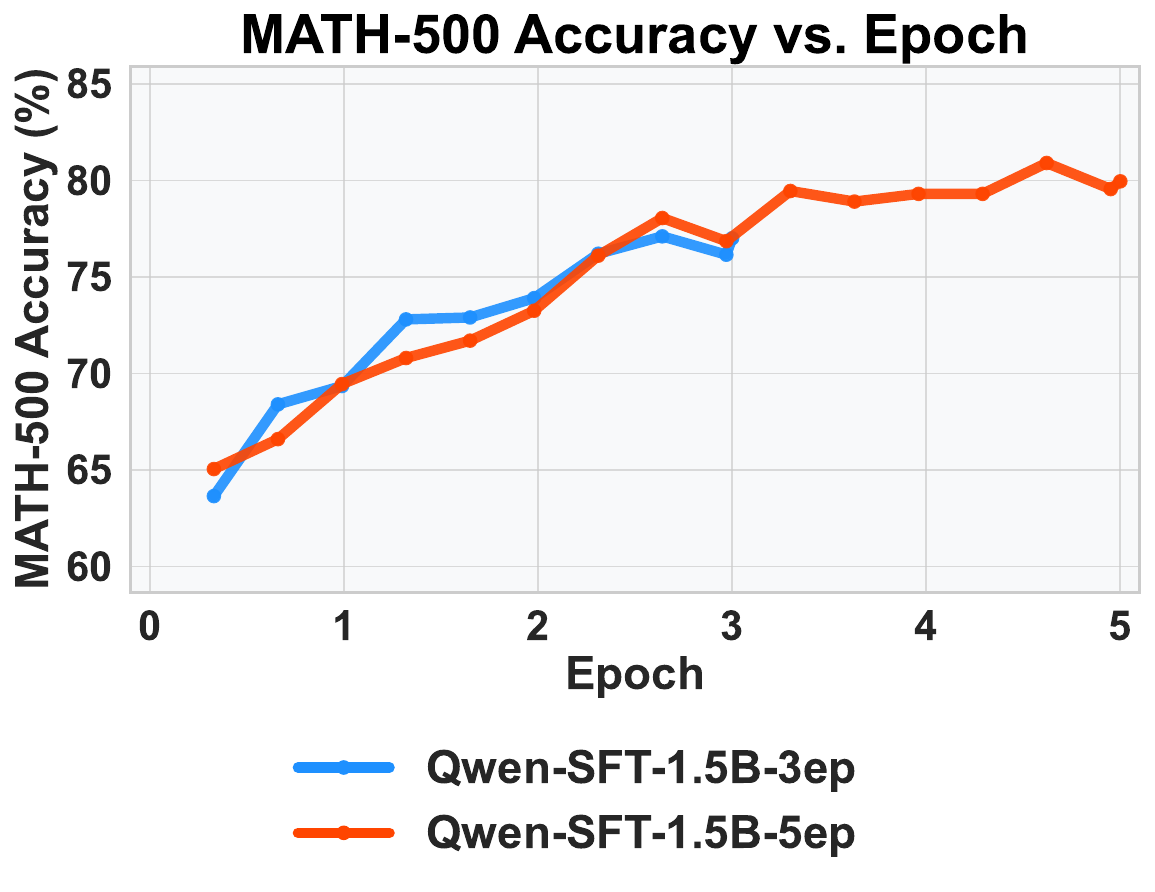}
        \caption{\small{SFT MATH-500 accuracy vs. training epochs.}}
    \label{fig:sft_math_accuracy_epoch}
    \end{figure}
We first establish the performance achievable using only positive distilled data via Supervised Fine-Tuning (SFT). As illustrated by Figure~\ref{fig:sft_math_accuracy_epoch}, performance increases for approximately 5 epochs before eventually plateauing. This observation highlights the limitations of learning solely from positive traces and motivates the utilization of negative signals.

\subsection{Performance-Stability Tradeoff in DPO}
\label{sec:dpo_tradeoff}

\begin{figure*}[ht]
    \centering
    \includegraphics[width=1.0\textwidth]{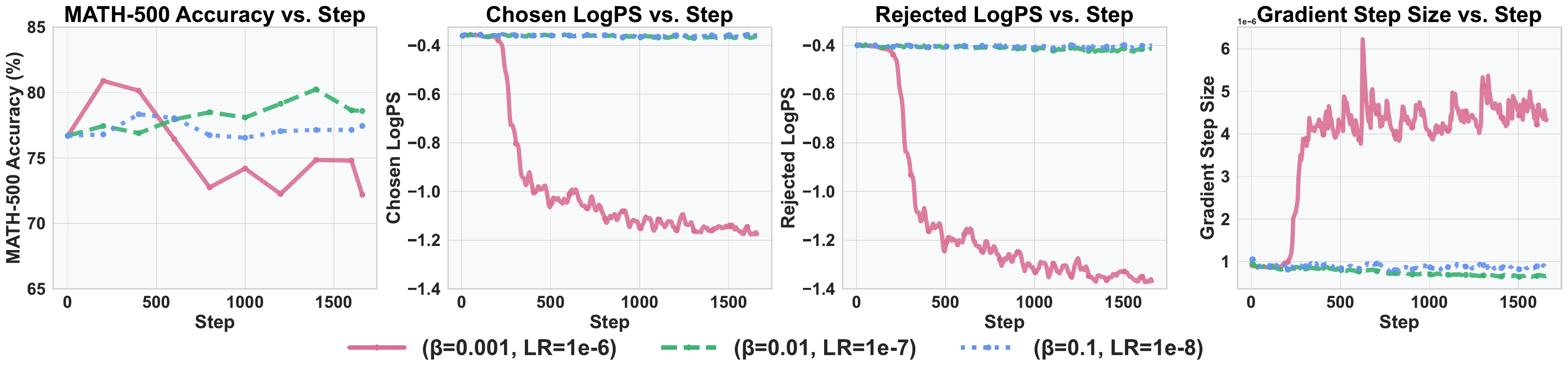}
    \caption{\small \textbf{DPO training dynamics with respect to $\beta$}, when initial gradient step sizes are controlled to be similar. LogPS visualizes the average per-token log probability of the model generating the chosen or rejected response. Gradient Step Size refers to the norm of the parameter update.}
    \label{fig:dpo_dynamics}
    \end{figure*}

\xpar{DPO dynamics with varying $\beta$ and similar initial gradient step sizes.}
Figure~\ref{fig:dpo_dynamics} illustrates DPO training dynamics for three configurations: $(\beta=0.001, \text{LR}=1 \times 10^{-6})$, $(\beta=0.01, \text{LR}=1 \times 10^{-7})$, and $(\beta=0.1, \text{LR}=1 \times 10^{-8})$. The learning rates were selected such that the initial gradient step sizes were comparable across these runs, as indicated in the ``Gradient Step Size vs. Step" subplot. The subsequent dynamics revealed a trade-off:
\begin{itemize}[topsep=1pt,itemsep=2pt,leftmargin=20pt]
    \item The lowest $\beta$ setting ($0.001$) achieved the highest peak accuracy (approximately 80.9\% on MATH-500) but subsequently experienced training collapse. This collapse in accuracy was accompanied by a sharp drop in chosen and rejected LogPS and a surge in gradient step size.
    \item Higher $\beta$ values ($0.01$, $0.1$) maintained stability throughout training but achieved lower peak accuracies (approximately 80.3\% and 78.3\%, respectively).
\end{itemize}
This exploration suggests that when initial gradient step sizes are matched, stronger KL regularization (higher $\beta$) yields more stable training, but performance can be constrained.

\xpar{Optimizing learning rates for different $\beta$ values.}
To further investigate whether the performance ceiling observed with higher $\beta$ values is an inherent limitation, we conducted learning rate (LR) sweeps for fixed $\beta$ values of $0.001$ and $0.01$ (Figure~\ref{fig:dpo_lr_sweeps}). This allows for a fairer comparison, as stronger regularization (higher $\beta$) can often accommodate larger gradient steps.
\begin{itemize}[topsep=1pt,itemsep=2pt,leftmargin=20pt]
    \item For $\beta=0.001$, an $\text{LR}$ of $2 \times 10^{-7}$ yielded the best peak performance at step 1000, reaching approximately 82.3\% on MATH-500.
    \item For $\beta=0.01$, an $\text{LR}$ of $2 \times 10^{-7}$ achieved the best peak for this $\beta$ value at step 1600, at approximately 81.2\%.
\end{itemize}

\begin{figure}[t]
    \centering
    \includegraphics[width=\linewidth]{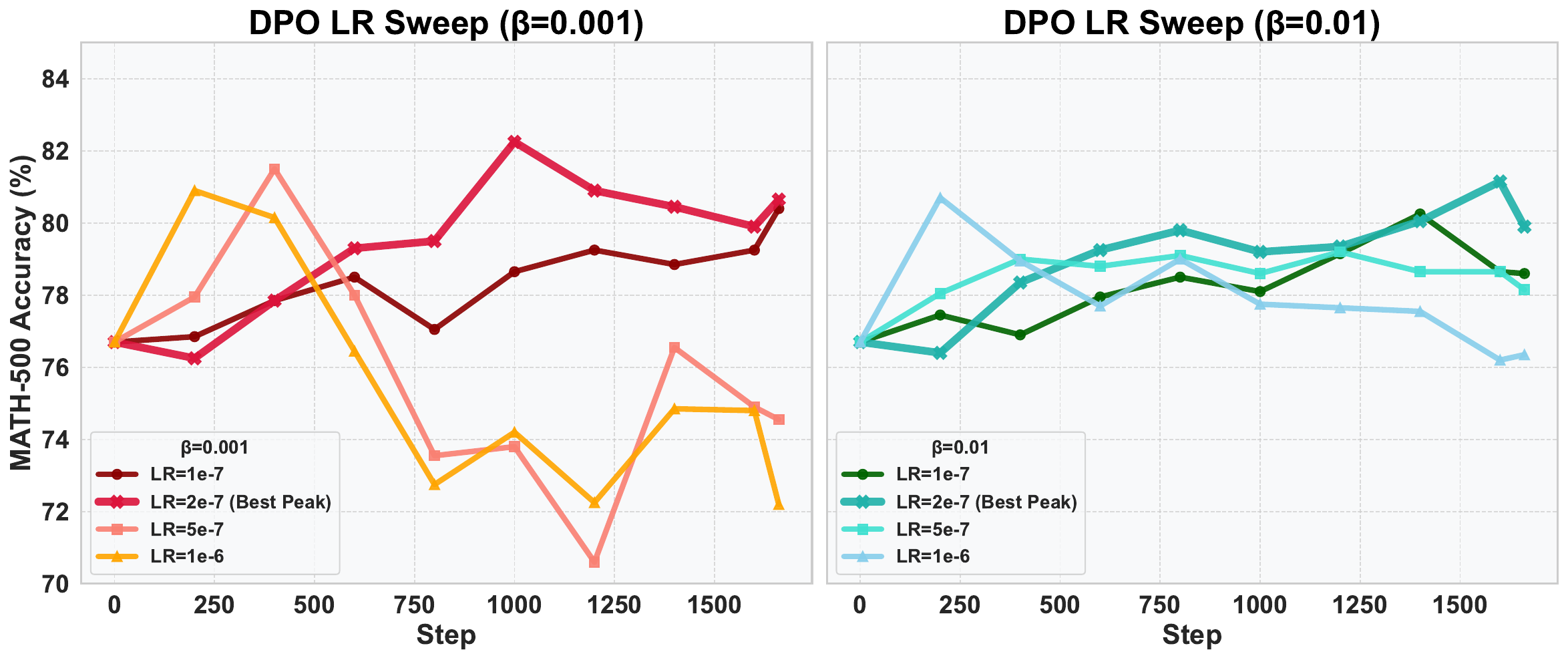}
    \caption{\small DPO MATH-500 accuracy with learning rate sweeps for $\beta=0.001$  and $\beta=0.01$.}
    \label{fig:dpo_lr_sweeps}
\end{figure}
Comparing the best-tuned runs from Figure~\ref{fig:dpo_lr_sweeps}, the configuration with the lower $\beta=0.001$ still achieved a significantly higher peak accuracy.

\begin{insightbox}{Observation: Stability and Peak Performance Trade-off in DPO with KL Regularization}
DPO's $\beta$ parameter, which controls KL regularization, presents a critical trade-off. Higher $\beta$ values enhance training stability, often allowing for more aggressive learning rates and more stable training steps. However, our experiments suggest that even with tuned LRs, \textbf{higher $\beta$ may restrict peak performance}. Conversely, lower $\beta$ values can yield higher peak performance, although they may require more careful tuning to avoid instability. 
\end{insightbox}

\begin{figure*}[ht]
    \centering
    \includegraphics[width=1.0\textwidth]{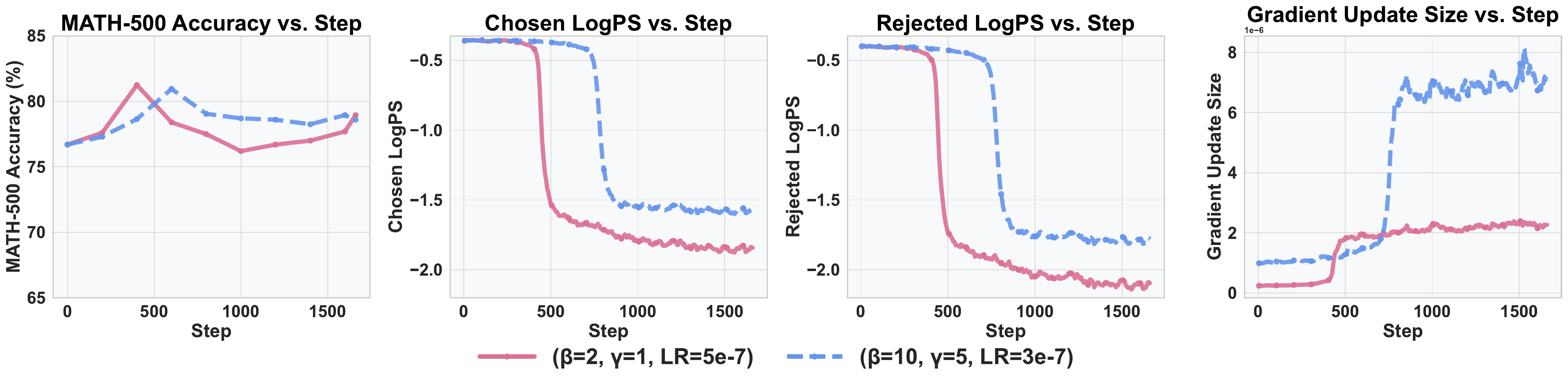}
\vspace{-15pt}
    \caption{\small SimPO training dynamics.}
    \label{fig:simpo_dynamics}
\vspace{-7pt}
\end{figure*}

\paragraph{Similar performance-stability tradeoff observed for SimPO.}
Preliminary experiments were also conducted with SimPO (Figure~\ref{fig:simpo_dynamics}). We found that with a fixed $\gamma/\beta$ ratio (0.5 in our tests), higher $\beta$ values correspond to stronger regularization effects. We experimented with $(\beta=2, \gamma=1, \text{LR}=5 \times 10^{-7})$ and $(\beta=10, \gamma=5, \text{LR}=3 \times 10^{-7})$. The $\beta=10$ run had a larger initial gradient update size and demonstrated greater stability (i.e., it ``collapsed" later than the $\beta=2$ run). However, its peak performance on MATH-500 was slightly lower than that of the $\beta=2$ run before its collapse. This reinforces the observation of a trade-off between stability and attainable peak performance.
\subsection{REDI: Achieving Stability and Performance with Asymmetric Weighting}
\label{subsec:redi_experiments}
Our REDI method directly optimizes log-likelihoods without KL regularization against a reference model, relying instead on asymmetric weighting to manage stability.

\begin{figure*}[ht]
    \centering
    \includegraphics[width=1.0\textwidth]{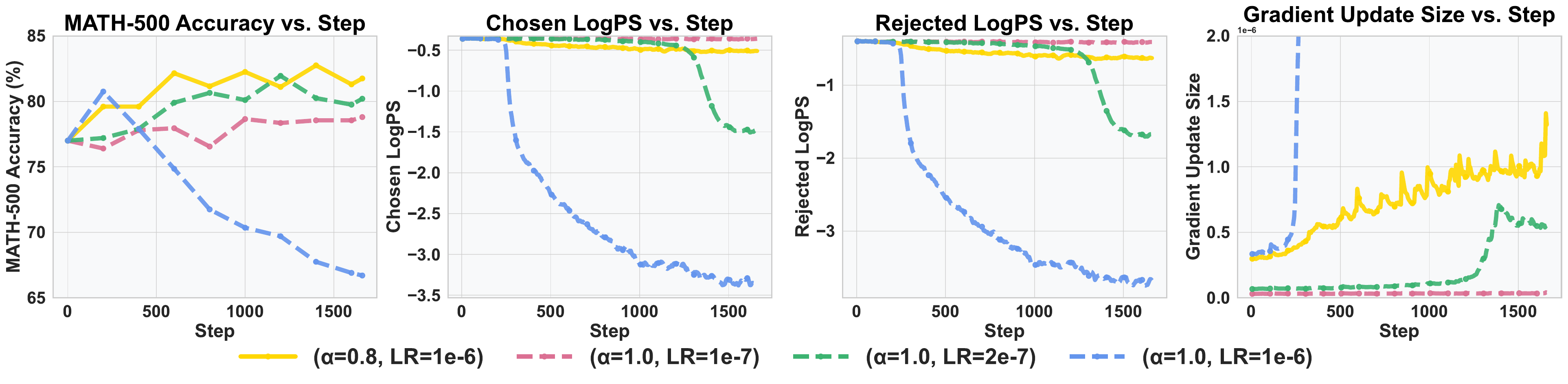}
\vspace{-15pt}
    \caption{Comparison of Symmetric REDI ($\alpha=1.0$) and Asymmetric REDI ($\alpha=0.8$).}
    \label{fig:combined_redi_metrics_plot_styled}
\vspace{-7pt}
\end{figure*}

\xpar{Symmetric REDI ($\alpha=1.0$).}
Figure~\ref{fig:combined_redi_metrics_plot_styled} shows that the Symmetric REDI objective exhibits dynamics similar to DPO with low $\beta$. A high LR ($1 \times 10^{-6}$) leads to rapid learning (peaking around 80.8\% MATH-500 accuracy) but then collapses, evidenced by sharp drops in chosen and rejected LogPS, as well as accuracy. However, reducing the learning rate significantly improves training stability. The ablation table (Table~\ref{tab:pass1_3ep}) further shows that a more stable symmetric REDI run ($\alpha=1.0, \text{LR}=2 \times 10^{-7}$) achieves 81.7\% on MATH-500, comparable to the best-tuned DPO result (81.3\%). This suggests that \textit{a simpler, REINFORCE-style and regularization-free objective can indeed match DPO's performance when its LR is carefully tuned}. Nevertheless, the trade-off between performance and stability persists. For instance, the stable $\text{LR}=1 \times 10^{-7}$ run, while avoiding LogPS collapse, achieves a lower peak accuracy than the unstable $\text{LR}=2 \times 10^{-7}$ run. This trade-off is particularly evident if we focus on the first 200 steps, where the least stable run with $\text{LR}=1 \times 10^{-6}$ achieves the highest accuracy (learns the fastest) before collapsing.

\xpar{Asymmetric weighting ($\alpha < 1.0$) is key for REDI.}
Figure~\ref{fig:combined_redi_metrics_plot_styled} (yellow solid line) demonstrates that REDI with $\alpha=0.8$ and a high LR of $1 \times 10^{-6}$ achieves rapid learning, comparable to the symmetric $\alpha=1.0$ high-LR run, but crucially, it avoids the training collapse observed in the symmetric case. It reaches a high peak performance and maintains it. The chosen and rejected LogPS do not suffer from collapse, and the gradient update size remains controlled.

\begin{insightbox}{Insight: Asymmetric weighting of positive and negative gradients improves stability and leads to higher peak performance}
Asymmetric weighting in the REDI objective (specifically, $\alpha < 1.0$, with $\alpha=0.8$ proving effective in our experiments) greatly impacts the training dynamics. In our experiments, it improves training stability and allows the use of more aggressive learning rates, which leads to faster learning and higher peak performance.
\end{insightbox}
\begin{table*}[htbp]
    \centering
    \caption{\small Model Performance Comparison (pass@1, 16 samples). We chose the best checkpoint for each configuration.}
    \label{tab:pass1_3ep}
    \resizebox{\textwidth}{!}{
    \begin{tabular}{lcccccc}
        \toprule
        Model Configuration & MATH-500 & AIME24 & AMC23 & Minerva & OlympiadBench & Avg. \\
        \midrule
        \rowcolor{SFTcolor} Qwen-SFT-1.5B-3ep (Start) & 76.7 \tiny{$\pm$0.3} & 18.1 \tiny{$\pm$1.1} & 52.8 \tiny{$\pm$0.6} & 24.6 \tiny{$\pm$0.5} & 37.5 \tiny{$\pm$0.3} & 41.9 \tiny{$\pm$0.3} \\
        \rowcolor{NeutralColor} DPO \small{($\beta=0.001, \text{LR}=2e-7$)} & 81.3 \tiny{$\pm$0.2} & 24.6 \tiny{$\pm$1.7} & 58.5 \tiny{$\pm$0.6} & 28.7 \tiny{$\pm$0.4} & 43.1 \tiny{$\pm$0.3} & 47.2 \tiny{$\pm$0.4} \\
        \rowcolor{NeutralColor} SimPO \small{($\beta=2, \gamma=1, \text{LR}=5e-7$)} & 81.1 \tiny{$\pm$0.3} & 24.8 \tiny{$\pm$1.9} & 58.8 \tiny{$\pm$0.6} & 29.1 \tiny{$\pm$0.2} & 42.2 \tiny{$\pm$0.3} & 47.2 \tiny{$\pm$0.4} \\
        \rowcolor{NeutralColor} Symmetric REDI \small{($\alpha=1.0, \text{LR}=2e-7$)} & \textbf{81.7 \tiny{$\pm$0.2}} & 25.8 \tiny{$\pm$1.3} & \textbf{59.5 \tiny{$\pm$1.1}} & 29.3 \tiny{$\pm$0.4} & 42.1 \tiny{$\pm$0.2} & 47.7 \tiny{$\pm$0.4} \\ 
        \rowcolor{REDIcolor} \textbf{REDI} \small{($\alpha=0.8, \text{LR}=1e-6$)} & \textbf{81.7 \tiny{$\pm$0.2}} & \textbf{27.3 \tiny{$\pm$1.4}} & 58.8 \tiny{$\pm$0.8} & \textbf{30.4 \tiny{$\pm$0.5}} & \textbf{43.4 \tiny{$\pm$0.3}} & \textbf{48.3 \tiny{$\pm$0.3}} \\
        \bottomrule
    \end{tabular}
    }
\end{table*}
\subsection{Tuning the Asymmetric Weighting Factor $\alpha$ in REDI}
We studied $\alpha \in \{0.2, 0.5, 0.8\}$ and found that $\alpha = 0.8$ provided the best balance for achieving strong test-time performance while maintaining stability. Lowering $\alpha$ further (e.g., to 0.5 or 0.2) lower the impact of negative gradients and tended to degrade peak performance. This is intuitive, as lower $\alpha$ values make the objective more similar to SFT on positive examples only, which we have shown to plateau earlier. We advocate setting $\alpha$ to a value like $0.8$, which is close to $1.0$, to benefit from enhanced stability without a significant sacrifice in peak performance. Refer to Appendix~\ref{subsec:appendix_alpha_tuning} for detailed ablation on $\alpha$.

\subsection{Summary of Ablation and Final Model Performance}
\xpar{Comparative analysis of REDI against established objectives.} 
Table~\ref{tab:pass1_3ep} summarizes the optimal outcomes from our ablation studies across key reasoning benchmarks (pass@1 over 16 samples), with all configurations initialized from the \texttt{Qwen-SFT-1.5B-3ep} model. Our REDI objective ($\alpha=0.8$, $\text{LR}=1 \times 10^{-6}$) consistently surpasses the SFT baseline and optimized DPO, SimPO, and symmetric REDI configurations across all metrics, achieving a benchmark average of 48.3\%.

\paragraph{Advancing data efficiency.} 
When REDI stage 2 training is applied to the stronger \texttt{Qwen-SFT-1.5B-5ep} baseline, our final \texttt{Qwen-REDI-1.5B} model attains strong results as shown in Table~\ref{tab:pass1_5ep}. Remarkably, \texttt{Qwen-REDI-1.5B}, post-trained on merely 131k openly available traces, outperforms \texttt{DeepSeek-R1-Distill-Qwen-1.5B} (trained on 800k proprietary samples). This underscores the \textbf{exceptional data efficiency} of our Reinforcement Distillation framework, achieved through systematic utilization of previously discarded negative traces.

\subsection{REDI is Robust and Generalizes Well}
\label{sec:generalizability}

To assess REDI's broader applicability, we applied it to \texttt{Llama-3.2-3B} and \texttt{Qwen2.5-Math-7B}, observing consistent gains over SFT baselines, demonstrating that learning from distilled negative traces is helpful for wider range of models. We also evaluated \texttt{Qwen-REDI-1.5B} on out-of-domain tasks (GPQA for scientific reasoning, HumanEval for code generation), finding substantial improvements despite training only on mathematical data. These results suggest REDI cultivates generalizable reasoning capabilities. Details are in Appendix~\ref{subsec:appendix_generalizability}.

\subsection{REDI Preserves Potential for Online RL}
\label{sec:redi_online_rl_potential}

We examined whether REDI's performance gains come at the cost of solution diversity by analyzing pass@16 scores. As shown in Appendix~\ref{subsec:appendix_online_rl_potential} (Tables~\ref{tab:pass16_3ep},~\ref{tab:pass16_5ep}), REDI maintains or improves pass@16 while enhancing pass@1, suggesting it broadens rather than narrows the model's capabilities, keeping models well-suited for subsequent online RL.

\section{Related Work}
\label{sec:related_work}

\paragraph{Eliciting reasoning in LLMs.}
Large reasoning models achieve strong problem solving when trained with online reinforcement learning (RL) and verifiable rewards, as in DeepSeek-R1 and follow-ups \citep{deepseekai2025deepseekr1incentivizingreasoningcapability,tinyzero,zeng2025simplerlzooinvestigatingtamingzero,yue2025limit-of-rlvr}. A commonly noted advantage of RL is its capacity to incorporate corrective signals, including penalties for errors. By contrast, standard distillation from teacher CoT traces typically relies on rejection sampling and retains only correct solutions \citep{openthoughts,openr1,bespoke_stratos,ye2025limoreasoning,muennighoff2025s1simpletesttimescaling,wen2025lightr1curriculumsftdpo}. Our approach narrows this gap by leveraging both positive and negative signals in an offline setting, improving the data efficiency of distillation.

\paragraph{Bridging Online RL with Distillation.}
A recent thread explores using teacher/expert guidance during RL to go beyond purely on-policy rollouts. LUFFY mixes off-policy demonstrations with on-policy GRPO to improve reasoning and generalization \citep{yan2025learning}. TAPO injects high-level ``thought patterns" as external guidance to augment exploration \citep{wu2025tapo}. CHORD dynamically harmonizes SFT-style expert supervision with on-policy RL via global and token-wise weighting \citep{zhang2025chord}. These works are concurrent with ours and target online RL with teacher guidance. In contrast, \textit{REDI} is fully offline: it consumes only pre-collected teacher traces and attains strong data efficiency by systematically utilizing off-policy negatives traces, without any on-policy rollouts.

\paragraph{Learning dynamics of LLM post-training.}
Instabilities in preference/RL post-training (e.g., DPO-style objectives) have been traced to negative gradient side-effects, especially when off-policy, motivating auxiliary stabilizers \citep{yan20253dpropertiesidentifyingchallengesdpo,razin2025unintentional,ren2025learning,pang2024iterative,zhangonline}. Concurrent to our research, Zhu et al. report that \emph{negative gradients are uniquely beneficial} for improving reasoning in \emph{online} RLVR, yet they also note that using negative gradients alone is highly unstable—making the \emph{relative weighting between positive and negative gradients} crucial \citep{zhu2025negative}. We study a fully \emph{offline} distillation regime, which is especially sensitive to instability, and find that moderately \emph{down weighting} negative gradients yields stable training while preserving the performance boost from utilizing learning signals from negative data.

\section{Conclusion}
\label{sec:conclusion}
We present a data-efficient distillation recipe that boosts reasoning by leveraging discarded negative traces. Our method uses a simple REINFORCE-style objective, REDI, to learn from both positive and negative signals. The effectiveness is clear: our \texttt{Qwen-REDI-1.5B} model, trained on only 131k open traces, matches a model trained on 800k proprietary traces, proving the significant value of negative distilled data.
\newpage
\section{Limitations}
Our study's scope is primarily on improving data efficiency in distillation. While our method demonstrates competitive performance, it was not explicitly designed to establish new state-of-the-art benchmarks. Furthermore, our findings are based on traces distilled from a single teacher model. Investigating whether REDI generalizes to reasoning traces from other advanced models, which may exhibit different error patterns, remains a valuable direction for future research.
\bibliography{custom}

\newpage
\appendix
\section{Convergence Guarantee}\label{sec:appendix_convergence}
\begin{definition}[REDI Loss]\label{def:loss}
    Let $\pi_\theta$ be the target policy model, which is parametrized by $\theta \in \R^d$. Let $y_w$ be the preferred data and $y_l$ be the not preferred data. Let $N$ be the number of data pairs, i.e., the number of $(y_w, y_l)$. Let $\alpha$ be a preset hyperparameter. The loss function is given as:
    \begin{equation*}
        \eqfit{\loss(\theta) := \E_{(x, y_w, y_l)\sim \mathcal{D}_{\text{Pref}}} \left[ - \tfrac{\log \pi_\theta(y_{w}\mid x)}{|y_{w}|} + \alpha \, \tfrac{\log \pi_\theta(y_{l}\mid x)}{|y_{l}|} \right]}
    \end{equation*}
\end{definition}

\begin{assumption}\label{asp:lipschitz}
    Let the loss function $\loss(\theta)$ be defined in Definition~\ref{def:loss}. We assume that $\nabla \loss(\theta)$ is $L$-Lipschitz, i.e., for $\theta,\wh{\theta} \in \R^d$, we have
    \begin{align*}
        \|\nabla \loss(\theta) - \nabla \loss(\wh{\theta}) \| \leq L \cdot \| \theta-\wh{\theta}  \|.
    \end{align*}
\end{assumption}

\begin{definition}[Update Rule with Linear Scheme]\label{def:update}
    Let the loss function $\loss(\theta)$ be defined in Definition~\ref{def:loss}. At training step $k$, for the parameter $\theta \in \R^d$, we have:
    \begin{align*}
        \theta^{k} = \theta^{k-1} - \eta_{k-1} \cdot \nabla \loss (\theta^{k-1}).
    \end{align*}
    Our Training scheme involve a linear warm-up stage and a linear decay stage. The learning rate $\eta$ is given by
    \begin{align*}
        \eta_k = \begin{cases}
            \un{\eta} + \frac{k}{\wt{k}} (\ov{\eta} -\un{\eta}), &\mathrm{if}~k \leq \wt{k};  \\
            \ov{\eta} - \frac{k - \wt{k}}{K - \wt{k}} (\ov{\eta} -\un{\eta}), &\mathrm{if}~k > \wt{k},
        \end{cases}
    \end{align*}
    where $\wt{k} \in [K]$ is a preset hyperparameter denoting the number of warm-up steps, $\un{\eta}$ and $\ov{\eta}$ are the minimum value and maximum value of learning rate $\eta$ respectively, i.e., $\eta \in [\un{\eta},\ov{\eta}]$. Specifically, we set $\ov{\eta}<1/L$, where $L$ is the Lipschitz constant in Assumption~\ref{asp:lipschitz}.
\end{definition}

\begin{theorem}[Convergence Guarantee]\label{thm:conv}
    Let the loss function $\loss (\theta)$ be defined in Definition~\ref{def:loss}. For any small $\epsilon >0$, the update iterations satisfy:
    \begin{align*}
        \min_{k \in [K]} \E[\| \nabla \loss(\theta^k)\|^2 ] \leq \epsilon.
    \end{align*}
\end{theorem}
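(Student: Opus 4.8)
The plan is to run the textbook descent-lemma argument for non-convex gradient descent, adapted to the specified warm-up/decay learning-rate schedule. First I would invoke Assumption~\ref{asp:lipschitz} ($L$-smoothness of $\loss$) to obtain the descent inequality
\begin{align*}
\loss(\theta^{k}) \leq \loss(\theta^{k-1}) - \eta_{k-1}\Bigl(1 - \tfrac{L\eta_{k-1}}{2}\Bigr)\|\nabla\loss(\theta^{k-1})\|^2,
\end{align*}
which follows by plugging the update rule of Definition~\ref{def:update} into the quadratic upper bound $\loss(\theta^{k}) \leq \loss(\theta^{k-1}) + \langle \nabla\loss(\theta^{k-1}), \theta^k - \theta^{k-1}\rangle + \tfrac{L}{2}\|\theta^k-\theta^{k-1}\|^2$. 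Since $\eta_k \leq \ov{\eta} < 1/L$ for every $k$, the factor $1 - L\eta_{k-1}/2$ is bounded below by a positive constant, say $c := 1 - L\ov{\eta}/2 > 1/2 > 0$, and also $\eta_{k-1} \geq \un{\eta} > 0$, so each step gives a strict decrease of at least $c\,\un{\eta}\,\|\nabla\loss(\theta^{k-1})\|^2$.

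Next I would telescope this inequality over $k = 1,\dots,K$. Summing yields
\begin{align*}
c\,\un{\eta}\sum_{k=0}^{K-1}\|\nabla\loss(\theta^{k})\|^2 \leq \sum_{k=1}^{K}\bigl(\loss(\theta^{k-1}) - \loss(\theta^{k})\bigr) = \loss(\theta^{0}) - \loss(\theta^{K}) \leq \loss(\theta^{0}) - \loss^{\star},
\end{align*}
where $\loss^{\star}$ is any lower bound on $\loss$ (finite since $-\log\pi_\theta \geq 0$ and the negative term is controlled; more carefully one assumes $\loss$ is bounded below, which is implicit). Dividing by $K$ and using $\min_k \|\nabla\loss(\theta^k)\|^2 \leq \frac{1}{K}\sum_k \|\nabla\loss(\theta^k)\|^2$ gives
\begin{align*}
\min_{k\in[K]}\|\nabla\loss(\theta^k)\|^2 \leq \frac{\loss(\theta^0) - \loss^{\star}}{c\,\un{\eta}\,K}.
\end{align*}
Choosing $K \geq \frac{\loss(\theta^0) - \loss^{\star}}{c\,\un{\eta}\,\epsilon}$ makes the right-hand side at most $\epsilon$, which is exactly the claimed bound; the expectation in the theorem statement is vacuous in the deterministic full-batch setting (or handled by taking expectations throughout if the gradient is a stochastic estimate, in which case one additionally needs the standard unbiasedness and bounded-variance assumptions and the $L\eta^2/2$ term absorbs a variance contribution).

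The main obstacle — and the place where the argument needs the most care — is verifying that the learning-rate schedule genuinely keeps $\eta_k \in [\un{\eta}, \ov{\eta}]$ at every step so that the per-step contraction constant $c$ is uniform; this is where the piecewise-linear warm-up/decay structure of Definition~\ref{def:update} matters, but it is a routine check that the linear interpolation stays within its endpoints. A secondary subtlety is justifying the lower bound $\loss^{\star}$: one should note that the asymmetric weighting $\alpha \in [0,1]$ together with sequence-length normalization keeps the negative-trace term from driving the loss to $-\infty$ in the regime of interest, or simply add boundedness-below as a standing assumption (as is conventional in non-convex convergence statements). No cleverness beyond the standard descent lemma is required; the contribution is packaging it for the specific REDI objective and schedule.
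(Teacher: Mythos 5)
Your proposal is correct and follows essentially the same route as the paper: the descent lemma from $L$-smoothness, the bound $\eta_k \in [\un{\eta},\ov{\eta}]$ with $\ov{\eta} < 1/L$ to get a uniform per-step constant $(1 - 0.5L\ov{\eta})$, telescoping, the min-versus-average bound, and choosing $K$ of order $(\loss(\theta^0)-\loss(\theta^*))/(\un{\eta}(1-0.5L\ov{\eta})\epsilon)$. Your added remarks about boundedness below and the vacuous expectation are reasonable caveats the paper glosses over, but they do not change the argument.
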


\begin{proof}
    At time step $k-1$, we analyze the expected loss and perform a Taylor expansion of $\loss(\theta)$:
    \begin{equation*}
        \eqfit{\begin{aligned}
            \E [\loss (\theta^{k})] 
            &\leq \E\big[\loss(\theta^{k-1}) + (\theta^{k} - \theta^{k-1})^\top \nabla \loss(\theta^{k-1}) + 0.5 L \, \|\theta^{k} - \theta^{k-1} \|^2\big]\\
            &\leq \E\big[\loss(\theta^{k-1}) - \eta_{k-1} \, \nabla \loss(\theta^{k-1}) ^\top \nabla \loss(\theta^{k-1}) + 0.5 L \, \| \eta_{k-1} \, \nabla \loss(\theta^{k-1})\|^2\big]\\
            &= \E[\loss(\theta^{k-1})] - \eta_{k-1} \, \E[\| \nabla \loss (\theta^{k-1})\|^2] + 0.5L\eta_{k-1}^2 \, \E[\| \nabla \loss (\theta^{k-1})\|^2],
        \end{aligned}}
    \end{equation*}
    where the first step follows from Assumption~\ref{asp:lipschitz}, the second step follows from Definition~\ref{def:update}, and the third step follows from basic algebra.

    Thus, we can show that
    \begin{equation}\label{eq:conv1}
        \eqfit{\E [ \| \nabla \loss (\theta^{k-1})\|^2] 
        \leq \frac{1}{\eta (1-0.5 L \eta)}\,\E [\loss (\theta^{k-1}) - \loss (\theta^{k})]}
    \end{equation}
    which follows from Definition~\ref{def:update} and basic algebra.
    
    Further, for the minimal value of $\E[\| \nabla \loss(\theta^{k-1}) \|^2]$, we have
    \begin{equation*}
        \eqfit{\begin{aligned}
            \min_{k \in [K]} \E[\| \nabla \loss(\theta^{k-1}) \|^2] 
            &\leq \frac{1}{K} \sum_{k = 1}^{K} \E[\| \nabla \loss(\theta^{k-1}) \|^2] \\
            &\leq \frac{1}{K} \sum_{k = 1}^{K} \Big(\tfrac{1}{\eta_{k-1} (1-0.5 L \eta_{k-1})}\,\E [\loss (\theta^{k-1}) - \loss (\theta^{k})]\Big) \\
            &\leq \tfrac{1}{K \un{\eta} (1-0.5 L \ov{\eta})}\,\big(\loss (\theta^{0}) - \loss (\theta^{K})\big)\\
            &\leq \tfrac{1}{K \un{\eta} (1-0.5 L \ov{\eta})}\,\big(\loss (\theta^{0}) - \loss (\theta^{*})\big)
        \end{aligned}}
    \end{equation*}
    where the first step follows from the minimum is always smaller than the average, the second step follows from Eq.~\eqref{eq:conv1}, the third step follows from Definition~\ref{def:update} and basic algebra, the fourth step follows from $\loss (\theta^*) \leq \loss (\theta^K)$.
    
    Plugging in 
    \begin{align*}
        K = \frac{\loss (\theta^{0}) - \loss (\theta^{*})}{\un{\eta} (1-0.5 L \ov{\eta}) \epsilon},
    \end{align*}
    we finish the proof.
\end{proof}

\section{Relationship between SimPO and Our Loss Function}
\label{sec:appendix_simpo_relationship_derivation}
First, we restate SimPO loss Eq.~\eqref{eq:simpo_loss} as follows:
\begin{equation*}
    \eqfit{\loss_{\text{SimPO}}(\theta) = - \E_{(x, y_w, y_l) \,\sim\, \mathcal{D}_{\text{Pref}}} \left[ \log \sigma \left( \beta \left( \tfrac{\log \pi_\theta(y_w\mid x)}{|y_w|} - \tfrac{\log \pi_\theta(y_l\mid x)}{|y_l|} \right)  - \gamma \right) \right]}
\end{equation*}

Then, we restate the gradient of SimPO, which is implicit on page 22 in~\cite{meng2024simpo}.
\begin{equation*}
    \eqfit{\begin{aligned}
        \nabla_\theta \loss_{\text{SimPO}}(\theta) 
        =\; & - \beta \E_{(x,y_w,t_l) \,\sim\, \mathcal{D}_\mathrm{Pref}}\Big[\sigma\!\Big(\tfrac{\beta}{|y_l|}\log(y_l\mid x) - \tfrac{\beta}{|y_w|}\log(y_w\mid x) + \gamma\Big) \cdot\\
        & \Big(\tfrac{1}{|y_w|} \, \nabla_\theta \log(y_w\mid x) - \tfrac{1}{|y_l|} \, \nabla_\theta \log(y_l\mid x)\Big)\Big].
    \end{aligned}}
\end{equation*}
Define $R_\theta := \frac{1}{|y_w|} \log(y_w~|~x) - \frac{1}{|y_l|} \log(y_l~|~x)$, we have
\begin{equation*}
    \eqfit{\begin{aligned}
        \nabla_\theta \loss_{\text{SimPO}}(\theta) 
        =\; & - \beta \E_{(x,y_w,t_l) \,\sim\, \mathcal{D}_\mathrm{Pref}}\big[\sigma(- \beta \cdot R_\theta + \gamma) \cdot \nabla_\theta R_\theta\big]\\
        =\; & - \E_{(x,y_w,t_l) \,\sim\, \mathcal{D}_\mathrm{Pref}}\big[\sigma(- \beta \cdot R_\theta + \gamma) \cdot \beta \, \nabla_\theta  R_\theta\big].
    \end{aligned}}
\end{equation*}
Also, we define
\begin{equation*}
    \eqfit{\loss_{\text{symm}}(\theta) = \E_{(x, y_w, y_l)\sim \mathcal{D}_{\text{Pref}}} \left( - \tfrac{\log \pi_\theta(y_{w}\mid x)}{|y_{w}|} + \tfrac{\log \pi_\theta(y_{l}\mid x)}{|y_{l}|} \right)}
\end{equation*}

\begin{fact}\label{fac:sigmoid_L}
    the sigmoid function is Lipschitz continuous, i.e., $|\sigma(x) - \sigma(x')| \leq 0.25|x - x'|$.
\end{fact}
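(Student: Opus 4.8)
\textbf{Proof proposal for Fact~\ref{fac:sigmoid_L}.}

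The plan is to bound the derivative of the sigmoid function and then invoke the mean value theorem. First I would recall that $\sigma$ is differentiable on all of $\R$ with $\sigma'(x) = \sigma(x)(1 - \sigma(x))$, which is a standard identity obtained by direct computation from $\sigma(x) = 1/(1+e^{-x})$. Next I would observe that since $\sigma(x) \in (0,1)$ for every $x \in \R$, the product $\sigma(x)(1-\sigma(x))$ is the value of the concave parabola $t \mapsto t(1-t)$ evaluated at $t = \sigma(x) \in (0,1)$; this parabola attains its maximum at $t = 1/2$, where its value is $1/4$. Hence $0 < \sigma'(x) \le 1/4$ for all $x \in \R$, so $\sigma$ is globally Lipschitz with constant $1/4$.

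To finish, given any $x, x' \in \R$ with $x \ne x'$, the mean value theorem provides a point $\xi$ strictly between $x$ and $x'$ with $\sigma(x) - \sigma(x') = \sigma'(\xi)(x - x')$; taking absolute values and applying the bound $|\sigma'(\xi)| \le 1/4$ yields $|\sigma(x) - \sigma(x')| \le \tfrac14 |x - x'|$, and the inequality is trivial when $x = x'$. This establishes the claim.

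There is no real obstacle here: the only thing to be careful about is justifying the uniform bound $\sup_{x} \sigma'(x) = 1/4$ rigorously rather than by inspection, which the parabola-maximization argument above handles cleanly. Alternatively one could avoid the mean value theorem entirely by writing $\sigma(x) - \sigma(x') = \int_{x'}^{x} \sigma'(t)\,\d t$ and bounding the integrand, which gives the same constant; I would include whichever phrasing is shortest.
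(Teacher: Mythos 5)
Your proof is correct: the identity $\sigma'(x) = \sigma(x)(1-\sigma(x))$, the bound $\sup_{t\in(0,1)} t(1-t) = 1/4$, and the mean value theorem (or the integral form) give exactly the claimed Lipschitz constant $0.25$. The paper itself states this as a Fact without proof, treating it as standard, so your argument simply supplies the routine justification the authors omitted; there is no discrepancy to reconcile.
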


\begin{lemma}
    Let $R_\theta$ be bounded by constant $c_0$. 
    Let the hyperparameter $\beta>0$ be an arbitrary small number.
    Let $\gamma$ be a constant.
    Let $\epsilon = \frac{\beta \cdot c_0}{4}$ which can be arbitary small, we have
    \begin{align*}
        | \sigma (-\beta R_\theta + \gamma) - \sigma (\gamma) | < \epsilon.
    \end{align*}
\end{lemma}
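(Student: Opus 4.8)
The plan is to prove this directly from Fact~\ref{fac:sigmoid_L}, the Lipschitz continuity of the sigmoid. First I would instantiate the Lipschitz bound at the two points $x = -\beta R_\theta + \gamma$ and $x' = \gamma$, which gives
\[
|\sigma(-\beta R_\theta + \gamma) - \sigma(\gamma)| \leq 0.25 \cdot |(-\beta R_\theta + \gamma) - \gamma| = 0.25 \cdot \beta \cdot |R_\theta|.
\]

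Next I would use the hypothesis that $R_\theta$ is bounded by $c_0$, i.e.\ $|R_\theta| \leq c_0$, to upper-bound the right-hand side by $0.25 \cdot \beta \cdot c_0 = \tfrac{\beta c_0}{4} = \epsilon$. This yields $|\sigma(-\beta R_\theta + \gamma) - \sigma(\gamma)| \leq \epsilon$. To obtain the strict inequality in the statement, I would observe that this is essentially a matter of convention: either one notes that since $\beta$ can be taken arbitrarily small, $\epsilon$ can be made smaller than any prescribed target (so the strict inequality against any fixed threshold holds), or one simply replaces $\epsilon$ by a slightly larger constant; alternatively, if $R_\theta$ is bounded \emph{strictly} by $c_0$ the chain of inequalities is strict throughout. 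I would present the clean chain of (non-strict) inequalities and then remark that the strictness follows from the arbitrariness of $\beta$.

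There is essentially no obstacle here — the lemma is a one-line consequence of the sigmoid's Lipschitz constant combined with the boundedness of $R_\theta$. The only mild subtlety is matching the strict-versus-non-strict inequality in the conclusion, which is purely cosmetic and handled by the remark above. The role of this lemma in the broader argument (not part of what I need to prove) is to justify that as $\beta \to 0$, the SimPO gradient $-\E[\sigma(-\beta R_\theta + \gamma)\,\beta \nabla_\theta R_\theta]$ has its sigmoid factor converging uniformly to the constant $\sigma(\gamma)$, so that after rescaling by $1/\beta$ the SimPO gradient direction approaches $-\sigma(\gamma)\,\E[\nabla_\theta R_\theta] = -\sigma(\gamma)\,\tfrac12 \nabla_\theta \loss_{\text{symm}}(\theta)$ up to the positive scalar $\sigma(\gamma)$ — i.e.\ the $\beta \to 0$ limit of SimPO recovers (a positive multiple of) the symmetric objective $\loss_{\text{symm}}$, and hence, with $\alpha=1$, the REDI objective.
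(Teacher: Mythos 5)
Your proposal is correct and follows essentially the same route as the paper's proof: apply Fact~\ref{fac:sigmoid_L} (the $1/4$-Lipschitz bound for the sigmoid) to the points $-\beta R_\theta+\gamma$ and $\gamma$, then bound $|R_\theta|$ by $c_0$ to obtain $\beta c_0/4=\epsilon$. If anything, you are slightly more careful than the paper, which writes the intermediate bound as $\beta R_\theta/4$ without the absolute value and does not comment on the strict-versus-non-strict inequality that you address.
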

\begin{proof}
    We can show
    \begin{align*}
        | \sigma (-\beta R + \gamma) - \sigma (\gamma) | 
        \leq ~ & \frac{\beta \cdot R_\theta}{4}\\
        \leq ~ & \frac{\beta \cdot c_0}{4},
    \end{align*}
    where the first step follows from Fact~\ref{fac:sigmoid_L}, the second step follows from $R_\theta$ is bounded by constant $c_0$.
\end{proof}

Given typical learning rate $\eta$,  adjusted learning rate $\eta'$. 
We claim the one step update over parameter $\theta$ with loss function $\loss_{\text{SimPO}}(\theta)$ is approximately equal to $\loss_{\text{symm}}(\theta)$.

\begin{proposition}
    Let the hyperparameter $\beta>0$ be an arbitrary small number. Let $\eta_\mathrm{SimPO}$ be set as an inverse multiple of $\beta$, i.e., $\eta_\mathrm{SimPO} = c_1/\beta$. Assume $\nabla_\theta R_\theta$ is bounded. Given the initial parameter $\theta_t$. We can choose learning rate $\eta = c_1 \cdot \sigma(\gamma)$, such that for the one step gradient decent update over 
        $
        \Delta^\mathrm{SimPO}_\theta := \theta_{t+1} - \theta_{t} = -\eta_\mathrm{SimPO} \nabla_\theta \loss_{\text{SimPO}}(\theta)
        $
     is approximately equal to the  gradient decent update over 
    $
         \Delta^\mathrm{symm}_\theta := \theta_{t+1} - \theta_{t} = -\eta \nabla_\theta \loss_{\text{symm}}(\theta),
    $
    i.e., $|\eta_\mathrm{SimPO} \nabla_\theta \loss_{\text{SimPO}}(\theta) - \eta \nabla_\theta \loss_{\text{symm}}(\theta)|$ can be arbitrary small.
\end{proposition}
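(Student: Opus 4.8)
The plan is to compare the two parameter increments directly by writing both gradients in a common form and then bounding their difference with the sigmoid Lemma and the boundedness hypotheses. First I would record the gradients. The restated SimPO gradient is $\nabla_\theta \loss_{\text{SimPO}}(\theta) = -\E_{(x,y_w,y_l)\sim\mathcal{D}_{\text{Pref}}}[\sigma(-\beta R_\theta+\gamma)\cdot\beta\,\nabla_\theta R_\theta]$, while $\loss_{\text{symm}}(\theta) = \E[-R_\theta]$ gives $\nabla_\theta \loss_{\text{symm}}(\theta) = -\E[\nabla_\theta R_\theta]$. Multiplying by the prescribed learning rates, the choice $\eta_{\text{SimPO}} = c_1/\beta$ is engineered precisely to cancel the explicit factor $\beta$ in the SimPO gradient, so that $\eta_{\text{SimPO}}\nabla_\theta \loss_{\text{SimPO}}(\theta) = -c_1\,\E[\sigma(-\beta R_\theta+\gamma)\,\nabla_\theta R_\theta]$, whereas with $\eta = c_1\sigma(\gamma)$ one has $\eta\,\nabla_\theta \loss_{\text{symm}}(\theta) = -c_1\sigma(\gamma)\,\E[\nabla_\theta R_\theta]$. (Here $|\cdot|$ in the statement should be read as the Euclidean norm $\|\cdot\|$ on $\R^d$.)

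Next I would subtract these two expressions and pull the norm inside the expectation via Jensen/triangle inequality:
\[
\bigl\| \eta_{\text{SimPO}}\nabla_\theta \loss_{\text{SimPO}}(\theta) - \eta\,\nabla_\theta \loss_{\text{symm}}(\theta) \bigr\|
= c_1\bigl\| \E[(\sigma(-\beta R_\theta+\gamma)-\sigma(\gamma))\,\nabla_\theta R_\theta] \bigr\|
\le c_1\,\E\bigl[ |\sigma(-\beta R_\theta+\gamma)-\sigma(\gamma)|\cdot\|\nabla_\theta R_\theta\| \bigr].
\]
The Lemma (with $R_\theta$ bounded by $c_0$) supplies the pointwise estimate $|\sigma(-\beta R_\theta+\gamma)-\sigma(\gamma)| \le \beta c_0/4$, and the hypothesis that $\nabla_\theta R_\theta$ is bounded, say $\|\nabla_\theta R_\theta\|\le c_2$, controls the other factor. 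Combining, the right-hand side is at most $c_1 c_0 c_2\,\beta/4$, which tends to $0$ as $\beta\to0$; since $\beta$ is a free parameter that may be taken arbitrarily small, the gap between the one-step updates can be made arbitrarily small, establishing the claim.

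The point requiring the most care is conceptual rather than computational: the whole argument rests on the exact cancellation $\eta_{\text{SimPO}}\cdot\beta = c_1$, so I would be explicit that the "typical" SimPO learning rate is modeled as scaling like $1/\beta$ — this is exactly what keeps the $\beta\to0$ limit non-degenerate, since otherwise the SimPO increment itself would simply vanish and the comparison would be vacuous. A secondary subtlety is that the bounds $c_0$ on $R_\theta$ and $c_2$ on $\nabla_\theta R_\theta$ must hold uniformly over the support of $\mathcal{D}_{\text{Pref}}$ and at the current iterate $\theta_t$, so that the expectation and the triangle inequality above are legitimate; for a fixed finite dataset and a smooth parametrization these are mild, since per-token log-probabilities and their gradients are bounded. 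Beyond the already-established Lipschitz property of $\sigma$ (Fact~\ref{fac:sigmoid_L}), no further estimate is needed — the remaining content is just the bookkeeping of the learning-rate rescaling.
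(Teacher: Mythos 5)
Your proposal is correct and follows essentially the same route as the paper's proof: cancel the explicit $\beta$ via the rescaled learning rate $\eta_{\mathrm{SimPO}}=c_1/\beta$, choose $\eta=c_1\sigma(\gamma)$, and bound the difference by the sigmoid Lipschitz lemma together with the boundedness of $R_\theta$ and $\nabla_\theta R_\theta$, yielding an $O(\beta)$ gap. If anything, your version is slightly tidier than the paper's, since you keep the expectation and norm explicit (via Jensen/triangle inequality) rather than treating $\nabla_\theta R_\theta$ as a scalar factor outside the bound.
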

\begin{proof}
We have
\begin{equation*}
    \eqfit{\begin{aligned}
        &|\eta_\mathrm{SimPO} \, \nabla_\theta \, \loss_{\text{SimPO}}(\theta) - \eta \, \nabla_\theta \, \loss_{\text{symm}}(\theta)| \\
        &= |\eta_\mathrm{SimPO} \, \sigma(- \beta \cdot R_\theta + \gamma) \cdot \beta \, \nabla_\theta  R_\theta - \eta \, \nabla_\theta R_\theta| \\
        &= |c_1 \, \sigma(- \beta \cdot R_\theta + \gamma) \, \nabla_\theta  R_\theta - \eta \, \nabla_\theta R_\theta|\\
        &= |c_1 \, \sigma(- \beta \cdot R_\theta + \gamma) \, \nabla_\theta  R_\theta - c_1 \, \sigma(\gamma) \, \nabla_\theta R_\theta|\\
        &= c_1 \, \|\nabla_\theta R_\theta\| \, | \sigma (-\beta R + \gamma) - \sigma (\gamma) | \\
        &\leq c_0 \, c_1 \, \|\nabla_\theta R_\theta\| \, \beta / 4,
    \end{aligned}}
\end{equation*}
    which can be arbitrarily small.
\end{proof}

\section{Detailed Experimental Setup}
\label{sec:appendix_experimental_setup}

This section provides a comprehensive overview of the experimental setup, including details on the base model, dataset curation, training configurations for both SFT and preference optimization stages, evaluation protocols, and computational resources. For more detailed implementation, readers may refer to the provided codebase.

\subsection{Base Model and Initial SFT Checkpoints}
All experiments commenced with the \texttt{Qwen2.5-Math-1.5B} model as the base LLM, chosen for its strong foundational capabilities in mathematical reasoning. Two SFT checkpoints were prepared from this base model to serve different purposes:
\begin{itemize}[topsep=1pt,itemsep=1pt,leftmargin=15pt]
    \item \texttt{Qwen-SFT-1.5B-3ep}: This model was fine-tuned on the $\mathcal{D}_{\text{SFT}}$ dataset for 3 epochs. It served as the starting point for the ablation studies involving DPO, SimPO, and REDI, as detailed in Section~\ref{sec:results} and Table~\ref{tab:pass1_3ep}.
    \item \texttt{Qwen-SFT-1.5B-5ep}: Fine-tuned for 5 epochs on $\mathcal{D}_{\text{SFT}}$, this model demonstrated improved SFT performance and was used as the SFT starting point for our final, best-performing \texttt{Qwen-REDI-1.5B} model presented in Table~\ref{tab:pass1_5ep}.
\end{itemize}

\subsection{Datasets}
\label{subsec:appendix_data}
As described in Section~\ref{subsec:base_model_datasets}, the data was derived from the OpenR1-Math-Raw corpus \citep{openr1}, excluding the \texttt{cn\_k12} subset due to its lower relative difficulty. A response was considered correct if both the Llama judge (an LLM-based verifier) and Math-Verify \citep{Kydlicek_Math-Verify_Math_Verification} (a rule-based verifier) labeled it as ``True"; otherwise, it was considered incorrect.

\begin{itemize}[topsep=1pt,itemsep=1pt,leftmargin=15pt]
    \item \textbf{$\mathcal{D}_{\text{SFT}}$ (Positive Traces Dataset):} Contained 77,629 problem-solution pairs $(x, y_w)$ where $y_w$ is a correct reasoning trace. This dataset was used for Stage 1 SFT.
    \item \textbf{$\mathcal{D}_{\text{Pref}}$ (Preference Pairs Dataset):} Consisted of 53,175 triplets $(x, y_w, y_l)$. This dataset was derived by selecting data from $\mathcal{D}_{\text{SFT}}$ for which an incorrect response $y_l$ (deemed incorrect by either Math-Verify or the Llama verifier) was also available for the same problem $x$. Each triplet comprises a problem $x$, a preferred correct trace $y_w$, and a rejected incorrect trace $y_l$. We further filtered out instances where queries exceeded 800 tokens, or either chosen ($y_w$) or rejected ($y_l$) responses exceeded 19,000 tokens. This dataset was used for Stage 2 preference optimization.
\end{itemize}

\subsection{Stage 1: Supervised Fine-Tuning (SFT)}
\begin{itemize}[topsep=1pt,itemsep=1pt,leftmargin=15pt]
    \item \textbf{Objective:} Maximize log-likelihood of positive traces (Eq.~\eqref{eq:sft_loss}).
    \item \textbf{Optimizer:} AdamW \citep{loshchilov2019decoupledweightdecayregularization} with $\beta_1=0.9$, $\beta_2=0.999$, $\epsilon=10^{-8}$, and a weight decay of $0.0001$.
    \item \textbf{Learning Rate Schedule:} Peak learning rate of $5 \times 10^{-5}$, with a linear warmup for the first 10\% of total training steps, followed by a linear decay to zero.
    \item \textbf{Batch Size:} 128.
    \item \textbf{Epochs:} 3 epochs for \texttt{Qwen-SFT-1.5B-3ep} and 5 epochs for \texttt{Qwen-SFT-1.5B-5ep}.
    \item \textbf{Max Sequence Length:} 32,768 tokens.
\end{itemize}

\subsection{Stage 2: Preference Optimization}
\label{subsec:appendix_hyperparam}
All preference optimization methods (DPO, SimPO, REDI) were initialized from an SFT checkpoint (\texttt{Qwen-SFT-1.5B-3ep} for ablations, \texttt{Qwen-SFT-1.5B-5ep} for the final model). Training was conducted on the $\mathcal{D}_{\text{Pref}}$ dataset.
\begin{itemize}[topsep=1pt,itemsep=1pt,leftmargin=15pt]
    \item \textbf{Optimizer:} AdamW with the same parameters as in Stage 1 ($\beta_1=0.9, \beta_2=0.999, \epsilon=10^{-8}$, weight decay $0.0001$).
    \item \textbf{Learning Rate Schedule:} Linear warmup for the first 10\% of total training steps, followed by linear decay to zero. Peak learning rates were method-specific and tuned as described below.
    \item \textbf{Batch Size:} 32.
    \item \textbf{Epochs:} 1 epoch over $\mathcal{D}_{\text{Pref}}$.
    \item \textbf{Max Sequence Length:} 800 tokens for queries (prompts $x$) and 19,000 tokens for responses ($y_w, y_l$).
\end{itemize}

\xpar{Hyperparameter Configurations and Tuning:}
We meticulously tuned hyperparameters for each preference optimization method. The reference model for DPO was the \texttt{Qwen-SFT-1.5B-3ep} checkpoint.

\begin{itemize}[topsep=1pt,itemsep=1pt,leftmargin=15pt]
    \item \textbf{DPO \citep{rafailov2024directpreferenceoptimizationlanguage}:}
        \begin{itemize}[topsep=0pt,itemsep=0pt,leftmargin=10pt]
            \item \textbf{$\beta$ values explored:} $\{0.001, 0.01, 0.1\}$.
            \item \textbf{Learning Rate (LR) exploration:} Specific LRs were tested for each $\beta$ value:
                \begin{itemize}[topsep=0pt,itemsep=0pt,leftmargin=8pt]
                    \item For $\beta=0.1$: $\{1 \times 10^{-6}, 1 \times 10^{-7}, 1 \times 10^{-8}\}$.
                    \item For $\beta=0.01$: $\{1 \times 10^{-6}, 5 \times 10^{-7}, 2 \times 10^{-7}, 1 \times 10^{-7}\}$.
                    \item For $\beta=0.001$: $\{1 \times 10^{-6}, 5 \times 10^{-7}, 2 \times 10^{-7}, 1 \times 10^{-7}\}$.
                \end{itemize}
            \item \textbf{Best Ablation Configuration (Table~\ref{tab:pass1_3ep}):} $\beta=0.001$, $\text{LR}=2 \times 10^{-7}$.
        \end{itemize}

    \item \textbf{SimPO \citep{meng2024simpo}:}
        \begin{itemize}[topsep=0pt,itemsep=0pt,leftmargin=10pt]
            \item Two primary configurations were evaluated based on different $\beta$ values, with the margin $\gamma$ set to maintain a $\gamma/\beta$ ratio of 0.5:
                \begin{itemize}[topsep=0pt,itemsep=0pt,leftmargin=8pt]
                    \item Configuration 1: $\beta=2, \gamma=1, \text{LR}=5 \times 10^{-7}$.
                    \item Configuration 2: $\beta=10, \gamma=5, \text{LR}=3 \times 10^{-7}$.
                \end{itemize}
            \item \textbf{Best Ablation Configuration (Table~\ref{tab:pass1_3ep}):} The first configuration, $\beta=2, \gamma=1, \text{LR}=5 \times 10^{-7}$, yielded superior results in our ablation studies.
        \end{itemize}

    \item \textbf{REDI (Ours):}
        \begin{itemize}[topsep=0pt,itemsep=0pt,leftmargin=10pt]
            \item \textbf{$\alpha$ (asymmetric weight) values explored:} $\{0.2, 0.5, 0.8, 1.0\}$. (See Appendix~\ref{sec:additional_results} for detailed $\alpha$ tuning).
            \item \textbf{Learning Rate (LR) exploration:}
                \begin{itemize}[topsep=0pt,itemsep=0pt,leftmargin=8pt]
                    \item For $\alpha \in \{0.2, 0.8\}$, a learning rate of $1 \times 10^{-6}$ was primarily used.
                    \item For $\alpha=0.5$, learning rates of $1 \times 10^{-6}$ and $2 \times 10^{-6}$ were tested.
                    \item For $\alpha=1.0$ (Symmetric REDI), learning rates of $\{1 \times 10^{-6}, 2 \times 10^{-7}, 1 \times 10^{-7}\}$ were evaluated.
                \end{itemize}
            \item \textbf{Best Ablation Configuration (Table~\ref{tab:pass1_3ep}):} $\alpha=0.8, \text{LR}=1 \times 10^{-6}$.
            \item \textbf{Final Model Configuration (Table~\ref{tab:pass1_5ep}):} For the \texttt{Qwen-REDI-1.5B} model, initialized from \texttt{Qwen-SFT-1.5B-5ep}, we used $\alpha=0.8$ with $\text{LR}=1 \times 10^{-6}$.
        \end{itemize}
\end{itemize}

\subsection{Evaluation}
\label{subsec:appendix_evaluation_details}

\xpar{Decoding Parameters:}
During all evaluations, generated samples were decoded using the following parameters:
\begin{itemize}[topsep=1pt,itemsep=1pt,leftmargin=15pt]
    \item Temperature: $0.6$
    \item Top P (nucleus sampling): $p=0.95$
    \item Maximum generation length: $32,768$ tokens
\end{itemize}

\xpar{Evaluation Frameworks, Protocols, and Benchmarks:}
We utilized two distinct configurations for evaluating model performance:
\begin{itemize}[topsep=1pt,itemsep=1pt,leftmargin=15pt]
    \item \textit{Intermediate Evaluations:} These evaluations were used for hyperparameter tuning, generating performance plots (e.g., Figures \ref{fig:dpo_dynamics}, \ref{fig:dpo_lr_sweeps}, \ref{fig:simpo_dynamics}, \ref{fig:combined_redi_metrics_plot_styled}), and ablation studies. They were conducted using LightEval \citep{lighteval} on the MATH-500 benchmark. Performance was measured as pass@1, averaged over 4 generated samples per problem.
    \item \textit{Final Model Evaluations:} These evaluations, presented in main comparison tables (Table~\ref{tab:pass1_5ep},\ref{tab:pass1_3ep},\ref{tab:pass16_3ep},\ref{tab:pass16_5ep}), were performed using the DeepScaleR/rllm \citep{deepscaler2025} framework. The benchmarks included MATH-500, AIME24, AMC23, Minerva, and OlympiadBench. Performance was measured as either pass@1 (averaged over 16 generated samples) or pass@16. For these evaluations, and specifically for our models, we fixed ``<think>" as the first token generated by our model to align our practices with DeepSeek-R1 series of models.
\end{itemize}

\xpar{SEM Calculation:}
\label{subsec:appendix_sem}
The reported Standard Error of the Mean (SEM) quantifies the uncertainty of this ``pass@1 over $k$ samples" score. It is calculated as $s/\sqrt{k}$. To obtain $s$, we first compute $k$ distinct ``benchmark-wide pass@1 scores." Each of these $k$ scores ($P_j$, for $j=1 \dots k$) is determined by evaluating the model's performance across the entire benchmark using \textit{only the $j$-th generated sample} for every problem. The term $s$ is then the standard deviation of these $k$ intermediate scores ($P_1, P_2, \dots, P_k$). This method estimates the variability of the overall ``pass@1 over $k$ samples" metric by assessing performance consistency across the individual samples drawn for each problem.

\xpar{Evaluation Prompt Format:}
For prompting, we followed the Open-R1 project \cite{openr1} and used the following template:

\begin{promptbox}{Prompt Template}
\texttt{Solve the following math problem efficiently and clearly.  The last line of your response should be of the following format: `Therefore, the final answer is: \$\textbackslash boxed\{\{ANSWER\}\}\$. I hope it is correct' (without quotes) where ANSWER is just the final number or expression that solves the problem. Think step by step before answering.\\\\
\{Question\}}
\end{promptbox}

\subsection{Computational Resources}
All model training and fine-tuning experiments were conducted on a distributed training cluster equipped with NVIDIA A100 80GB SXM GPUs. Each experiment was run on a node of 8 such GPUs. We utilized standard open-source libraries for large language model training, including Hugging Face Transformers for model architecture and tokenization, and Accelerate for distributed training management. DeepSpeed (ZeRO Stage 3 for DPO; ZeRO Stage 2 for SimPO and REDI) was employed to optimize memory usage and enable efficient training. Custom scripts were developed for data processing; the computational resources required for preprocessing were negligible.
\begin{table*}[t]
    \centering
    \caption{Approximate Training Times per Run (on one 8-GPU A100 80GB node).}
    \label{tab:training_times_appendix}
    \begin{tabular}{lcc}
        \toprule
        Task / Method                         & Duration per Run & Approx. GPU Hours \\
        \midrule
        SFT (3 epochs on $\mathcal{D}_{\text{SFT}}$) & 8 hours          & 64                \\
        SFT (5 epochs on $\mathcal{D}_{\text{SFT}}$) & 13 hours         & 104               \\
        DPO (1 epoch on $\mathcal{D}_{\text{Pref}}$) & ~5 hours 10 mins & ~41.3             \\
        SimPO (1 epoch on $\mathcal{D}_{\text{Pref}}$) & ~4 hours 50 mins & ~38.7             \\
        REDI (1 epoch on $\mathcal{D}_{\text{Pref}}$) & ~4 hours 10 mins & ~33.3             \\
        \bottomrule
    \end{tabular}
\end{table*}
\xpar{Training Times:}
The approximate training times per run on an 8-GPU (A100 80GB) node are summarized in Table~\ref{tab:training_times_appendix}.

The total training compute required for our final \texttt{Qwen-REDI-1.5B} model (SFT 5 epochs + REDI 1 epoch) is approximately 17 hours on an 8-GPU node (or around 136 A100 80GB GPU hours).

\xpar{Evaluation Times:}
The approximate evaluation times on an 8-GPU (A100 80GB) node are:
\begin{itemize}[topsep=1pt,itemsep=1pt,leftmargin=15pt]
    \item MATH-500 (pass@1 over 4 samples, LightEval): ~40 minutes.
    \item 5 Benchmarks (pass@1 over 16 samples, DeepScaleR/rllm): ~20 hours.
\end{itemize}

\xpar{Total Compute for Reproducibility:}
The total compute needed to reproduce all results presented in this paper (including all SFT runs, hyperparameter sweeps for DPO, SimPO, and REDI, final evaluations and with buffer for debugging) is estimated to be around 350 hours on an 8-GPU node, which translates to approximately 2,800 A100 80GB GPU hours.

\section{Additional Results}
\label{sec:additional_results}
\subsection{Ablation Study on REDI Hyperparameter $\alpha$}
\label{subsec:appendix_alpha_tuning}

\begin{figure*}[t]
    \centering
    \includegraphics[width=1.0\textwidth]{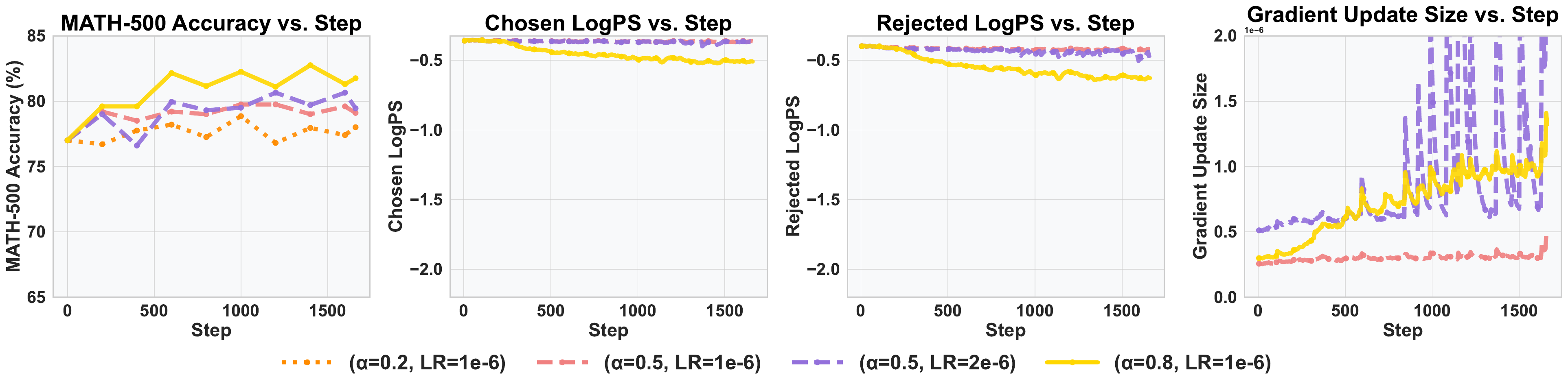}
    \caption{REDI training dynamics with varying $\alpha$ values and learning rates. All runs start from \texttt{Qwen-SFT-1.5B-3ep}. Metrics shown are MATH-500 Accuracy, Chosen LogPS, Rejected LogPS, and Gradient Update Size, all plotted against training steps.}
    \label{fig:alpha_tuning_plot_appendix}
\end{figure*}

This section details the ablation study conducted to determine an effective value for the asymmetric weighting hyperparameter $\alpha$ in the REDI objective (Equation~\ref{eq:redi_loss_expected}). The goal was to find an $\alpha$ that optimally leverages negative traces for performance improvement while maintaining training stability. We explored $\alpha \in \{0.2, 0.5, 0.8\}$ (with $\alpha=1.0$ representing the symmetric case discussed in the main paper), using the \texttt{Qwen-SFT-1.5B-3ep} model as the starting checkpoint. Figure~\ref{fig:alpha_tuning_plot_appendix} illustrates the training dynamics for key configurations.

\xpar{Analysis of Different $\alpha$ Configurations:}
\begin{itemize}[topsep=1pt,itemsep=2pt,leftmargin=20pt]
    \item \textbf{Low $\alpha$ ($\alpha=0.2, \text{LR}=1 \times 10^{-6}$, orange dotted line):} With this configuration, the model's performance largely fluctuated around the initial SFT level. This is anticipated, as a lower $\alpha$ value makes the REDI objective more closely resemble the SFT loss, for which performance had already neared a plateau.

    \item \textbf{Moderate $\alpha$ ($\alpha=0.5, \text{LR}=1 \times 10^{-6}$, pink dashed line):} Increasing $\alpha$ to $0.5$ while maintaining $\text{LR}=1 \times 10^{-6}$ yielded improved peak accuracy (approximately 79.8\%) compared to $\alpha=0.2$. This underscores the benefit of incorporating negative samples, even moderately, over relying solely on SFT.

    \item \textbf{Moderate $\alpha$ with Higher LR ($\alpha=0.5, \text{LR}=2 \times 10^{-6}$, purple dashed line):} Testing $\alpha=0.5$ with a more aggressive learning rate ($\text{LR}=2 \times 10^{-6}$) showed stable LogPS values, though accompanied by intermittent spikes in the gradient update size. While its peak performance slightly surpassed the lower LR variant with $\alpha=0.5$, it remained inferior to the $\alpha=0.8$ run.

    \item \textbf{Higher $\alpha$ ($\alpha=0.8, \text{LR}=1 \times 10^{-6}$, yellow solid line):} This configuration achieved the highest peak MATH-500 accuracy. We note that both the Chosen LogPS and Rejected LogPS steadily decreased throughout training. This concurrent decrease, in the absence of a sudden collapse and while performance is improving, appears to be benign. It is distinct from a catastrophic collapse where both LogPS would plummet sharply alongside performance.
\end{itemize}

\xpar{Importance of Update Direction over Raw Magnitude:}
The comparison between the $(\alpha=0.5, \text{LR}=2 \times 10^{-6})$ and $(\alpha=0.8, \text{LR}=1 \times 10^{-6})$ configurations is particularly insightful. The former features larger average gradient update sizes, implying a stronger raw \textit{magnitude} of adjustment driven by the negative log-likelihood term (which scales with $\alpha \times \text{LR}$). However, this did not translate to superior peak performance relative to the $(\alpha=0.8, \text{LR}=1 \times 10^{-6})$ run.

This observation supports the view that the \textit{direction} of the gradient, as modulated by $\alpha$, is more critical than its sheer magnitude derived from negative samples. A higher $\alpha$ (such as 0.8) appears to provide a more qualitatively beneficial gradient signal, guiding the model more effectively. Simply increasing the learning rate for a lower $\alpha$ (e.g., $\alpha=0.5$) to match or exceed the raw gradient magnitude of a higher $\alpha$ configuration does not necessarily yield better performance and may even compromise stability. The role of $\alpha$ thus extends beyond scaling the penalty; it is crucial for appropriately balancing the influence of negative examples to effectively shape the learning landscape.

\xpar{Suggestions on $\alpha$ Tuning:}
Based on these ablation studies, $\alpha=0.8$ combined with a learning rate of $1 \times 10^{-6}$ demonstrated the most favorable trade-off for the \texttt{Qwen-SFT-1.5B-3ep} checkpoint, achieving the highest peak performance while maintaining robust training stability. Consequently, for applying the REDI framework to other domains or datasets, we recommend initially fixing $\alpha = 0.8$ and primarily focusing on tuning the learning rate.

\subsection{Generalizability and Robustness of REDI}
\label{subsec:appendix_generalizability}

To validate that the benefits of the REDI framework extend beyond a single model family and task domain, we conducted a series of additional experiments testing its generalizability across different model architectures, scales, and out-of-domain reasoning tasks. REDI demonstrates storng results. For further details, refer to appendix

\paragraph{Generalization Across Architectures and Scales.}
We applied the REDI framework to two additional models: \texttt{Llama-3.2-3B}, representing a different model architecture, and \texttt{Qwen2.5-Math-7B}, a larger model within the same family. Following the same two-stage training protocol, we first established a strong Rejection Sampling SFT baseline for each model and then applied one epoch of REDI training. As shown in Table~\ref{tab:generalization_arch_scale}, REDI consistently delivers significant performance gains over the SFT baseline in both cases. On \texttt{Llama-3.2-3B}, REDI achieves a relative improvement of 17.7\% on the benchmark average, and on \texttt{Qwen2.5-Math-7B}, it improves the average score by 4.2\%. These results strongly suggest that REDI's effectiveness in leveraging negative signals is a general principle applicable across modern decoder-only transformer architectures and is not limited to a specific model size.

\begin{table*}[htbp]
    \centering
    \caption{\small \textbf{Generalization Across Architectures and Scales (pass@1 over 16 samples).} REDI consistently improves performance over the strong SFT baseline on both the \texttt{Llama-3.2-3B} and the larger \texttt{Qwen2.5-Math-7B} models.}
    \label{tab:generalization_arch_scale}
    \resizebox{\textwidth}{!}{
    \begin{tabular}{llcccccc}
        \toprule
        \textbf{Model} & \textbf{Method} & \textbf{MATH-500} & \textbf{AIME24} & \textbf{AMC23} & \textbf{Minerva} & \textbf{OlympiadBench} & \textbf{Avg.} \\
        \midrule
        \rowcolor{NeutralColor!20}
        \texttt{Llama-3.2-3B} & SFT & 54.3 & 8.5 & 33.9 & 14.3 & 28.8 & 28.0 \\
        \rowcolor{REDIcolor!25}
        & \textbf{REDI} & \textbf{61.2} & \textbf{13.3} & \textbf{41.9} & \textbf{15.3} & \textbf{32.9} & \textbf{32.9} \\
        \midrule
        \rowcolor{NeutralColor!20}
        \texttt{Qwen2.5-Math-7B} & SFT & 90.3 & 51.0 & 75.9 & 39.8 & 55.3 & 62.5 \\
        \rowcolor{REDIcolor!25}
        & \textbf{REDI} & \textbf{91.2} & \textbf{56.5} & \textbf{79.6} & \textbf{40.5} & \textbf{57.6} & \textbf{65.1} \\
        \bottomrule
    \end{tabular}
    }
\end{table*}

\paragraph{Generalization to Out-of-Domain Tasks.}
A critical question is whether the improved reasoning abilities fostered by REDI on mathematical data can transfer to other complex domains. To investigate this, we evaluated our final Qwen-REDI-1.5B model against its SFT baseline (Qwen-SFT-1.5B-3ep) on the out-of-domain benchmarks GPQA \citep{gpqa} for scientific reasoning and HumanEval \citep{chen2021codex} for code generation. The evaluation on GPQA was done with LightEval \cite{lighteval}, and the evaluation on HumanEval was done with lm-evaluation-harness \cite{eval-harness}. We note that our model lacks instruction-following capabilities for HumanEval, so we used the default prompt setting in lm-evaluation-harness, which is a continuation prompt.

The results, presented in Table~\ref{tab:ood_generalization}, demonstrate strong and positive transfer. REDI achieves a 26.8\% relative improvement on GPQA and a 181.2\% improvement in pass@1 on HumanEval. This suggests that by learning from both positive and negative examples, REDI cultivates a more robust and generalizable reasoning policy that is less prone to overfitting on the distribution of the distilled math data, enabling it to better follow instructions and solve problems in unrelated domains.

\begin{table*}[htbp]
    \centering
    \caption{\small \textbf{Out-of-Domain Generalization to Science QA and Coding.} The reasoning capabilities gained from REDI on math data transfer effectively to other complex domains, with particularly strong improvements on HumanEval.}
    \label{tab:ood_generalization}
    \resizebox{\textwidth}{!}{
    \begin{tabular}{lcccc}
        \toprule
        \textbf{Model} & \textbf{GPQA} (pass@1) & \textbf{OlympiadBench} (pass@1) & \textbf{HumanEval} (pass@1) & \textbf{HumanEval} (pass@5) \\
        \midrule
        \rowcolor{SFTcolor}
        Qwen-SFT-1.5B-3ep & 28.3\% & 37.5\% & 7.1\% & 22.6\% \\
        \rowcolor{REDIcolor}
        \textbf{Qwen-REDI-1.5B} & \textbf{35.9\%} & \textbf{43.4\%} & \textbf{19.9\%} & \textbf{39.6\%} \\
        \midrule
        \textit{Relative Improvement} & \textit{+26.8\%} & \textit{+15.7\%} & \textit{+181.2\%} & \textit{+75.7\%} \\
        \bottomrule
    \end{tabular}
    }
\end{table*}

\subsection{REDI Improves Performance Without Harming Potential for Future Online RL}
\label{subsec:appendix_online_rl_potential}

A key consideration is whether REDI enhances performance (like pass@1) by simply reinforcing the model's existing high-probability solution paths, or if it genuinely broadens its problem-solving abilities. Online Reinforcement Learning (RL) often works by refining and amplifying the knowledge already present within a model \citep{shao2024deepseekmathpushinglimitsmathematical, yue2025limit-of-rlvr}. Therefore, it's important that an offline method like REDI doesn't narrow the model's underlying knowledge base.

A model's ability to find a correct answer given multiple attempts (e.g., pass@k for larger $k$, like $k=16$) can serve as an indicator of the breadth of its existing knowledge. If REDI maintains or improves these pass@k scores, it suggests that while it refines certain solution strategies, it doesn't do so at the expense of the model's diverse underlying capabilities. This would mean the model remains a strong candidate for subsequent online RL.

We investigate this by examining pass@16 performance, as presented in Tables~\ref{tab:pass16_3ep} and~\ref{tab:pass16_5ep}.

Table~\ref{tab:pass16_3ep} shows that for models initialized from \texttt{Qwen-SFT-1.5B-3ep}, REDI (with $\alpha=0.8$) not only improves pass@1 (Table~\ref{tab:pass1_3ep}) but also sustains or improves pass@16 scores across several benchmarks (e.g., AIME24, Minerva, OlympiadBench) compared to both the SFT baseline and other preference optimization methods. For instance, it achieved the best pass@16 on AIME24 and Minerva among the preference-tuned models.

Furthermore, Table~\ref{tab:pass16_5ep} indicates that our final \texttt{Qwen-REDI-1.5B} model (initialized from the stronger \texttt{Qwen-SFT-1.5B-5ep}) maintains robust pass@16 performance. It achieves the highest pass@16 on AIME24 and matches or surpasses the SFT baseline and the \texttt{DeepSeek-R1-Distill-Qwen-1.5B} model on Minerva and OlympiadBench.

The consistent maintenance or improvement in pass@16 scores suggests that REDI's offline refinement does not merely over-optimize for a narrow set of high-probability solutions from the SFT model. Rather, these pass@16 results indicate that by learning from both the teacher's successful and unsuccessful solution attempts, REDI genuinely improves the model's overall problem-solving abilities. It appears to build these skills without causing the model to ``forget" or narrow down the range of solutions it could already generate. This is encouraging, as it suggests that REDI-trained models are well-prepared, and potentially even better suited, for subsequent performance gains through online RL.

\begin{table*}[htbp]
    \centering
    \caption{\small\textbf{Pass@16} Performance Comparison for Models Initialized from \texttt{Qwen-SFT-1.5B-3ep}.}
    \label{tab:pass16_3ep}
    \resizebox{\textwidth}{!}{
    \footnotesize
    \begin{tabular}{lccccc}
        \toprule
        Model Configuration & MATH-500 & AIME24 & AMC23 & Minerva & OlympiadBench \\
        \midrule
        \rowcolor{SFTcolor} Qwen-SFT-1.5B-3ep (Start)       & 94.6 & 53.3 & 86.7 & 57.7 & 63.1 \\
        \rowcolor{NeutralColor} DPO \small{($\beta=0.001, \text{LR}=2e-7$)} & 94.6 & 63.3 & 86.7 & 58.8 & \textbf{64.7} \\ 
        \rowcolor{NeutralColor} SimPO \small{($\beta=2, \gamma=1, \text{LR}=5e-7$)} & 94.8 & 60.0 & 89.2 & 58.5 & 63.3 \\
        \rowcolor{NeutralColor} Symmetric REDI \small{($\alpha=1.0, \text{LR}=2e-7$)} & \textbf{95.0} & 63.3 & \textbf{90.4} & 58.8 & 62.5 \\ 
        \rowcolor{REDIcolor} \textbf{REDI} \small{($\alpha=0.8, \text{LR}=1e-6$)} & \textbf{95.0} & \textbf{66.7} & 81.9 & \textbf{59.9} & 63.9 \\
        \bottomrule
    \end{tabular}
    }
\end{table*}
\begin{table*}[htbp]
    \centering
    \caption{\small\textbf{Pass@16} Performance for REDI Initialized from \texttt{Qwen-SFT-1.5B-5ep}.}
    \label{tab:pass16_5ep}
    \resizebox{\textwidth}{!}{
    \footnotesize
    \begin{tabular}{lccccc}
        \toprule
        Model & MATH-500 & AIME24 & AMC23 & Minerva & OlympiadBench \\
        \midrule
        \rowcolor{NeutralColor} DeepSeek-R1-Distill-Qwen-1.5B & \textbf{95.6} & 63.3 & \textbf{92.8} & 56.6 & \textbf{65.8} \\
        \rowcolor{SFTcolor} Qwen-SFT-1.5B-5ep \small{(SFT baseline)} & \textbf{95.6} & 56.7 & 86.7 & 56.3 & 64.3 \\
        \rowcolor{REDIcolor} \textbf{Qwen-REDI-1.5B} \small{($\alpha=0.8, \text{LR}=1e-6$)} & 95.0 & \textbf{66.7} & 90.4 & \textbf{57.0} & \textbf{65.8} \\
        \bottomrule
    \end{tabular}
    }
    \end{table*}

\section{Qualitative Analysis of Model Behavior}\label{sec:qualitative_analysis}

\subsection{Generation Statistics}
\begin{figure*}[ht]
    \centering
    \includegraphics[width=1.0\textwidth]{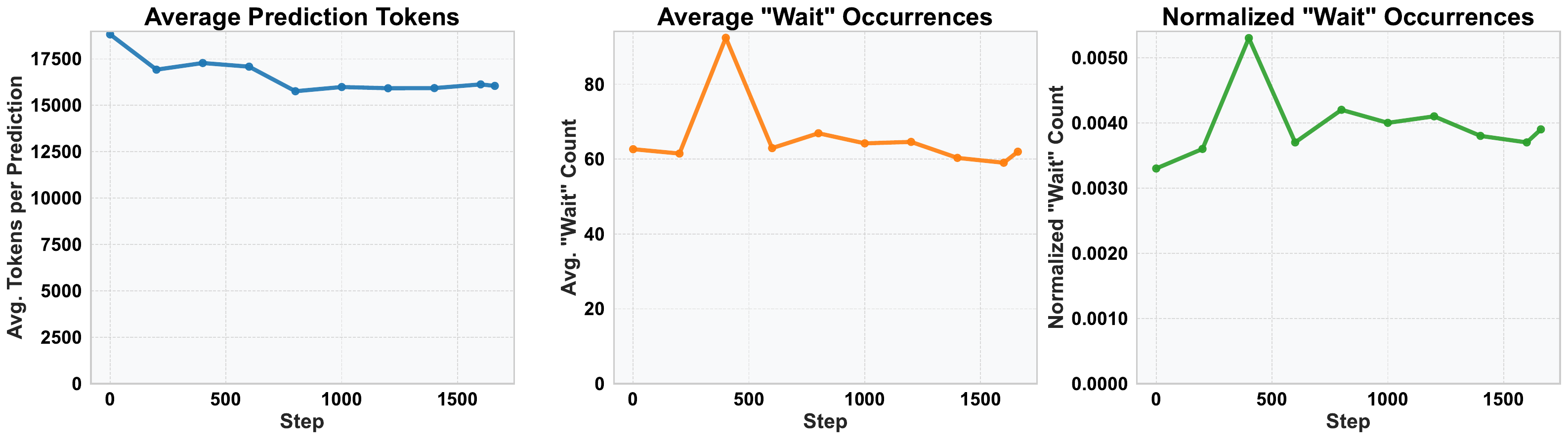}
    \caption{Generation statistics for the \texttt{REDI} model presented in Table~\ref{tab:pass1_3ep} (initialized from \texttt{Qwen-SFT-1.5B-3ep}, trained with REDI using $\alpha=0.8, \text{LR}=1 \times 10^{-6}$) on the AIME24 test set. Metrics shown are (Left) Average Prediction Tokens, (Center) Average ``Wait" Occurrences, and (Right) Normalized ``Wait" Occurrences, all plotted against REDI training steps. Step 0 represents the SFT model before REDI training.}
    \label{fig:redi_generation_stats_appendix}
\end{figure*}

To investigate qualitative changes in reasoning style during REDI training, we monitored key generation statistics. The frequency of terms like ``Wait" serves as an indicator of explicit reflective steps within the model's Chain-of-Thought (CoT), a common trait in reasoning models. The average generation length (token count per prediction) is also crucial, as complex reasoning often correlates with longer outputs. These metrics help assess how REDI influences the model's reasoning trace characteristics.

Figure~\ref{fig:redi_generation_stats_appendix} illustrates these trends for the REDI configuration detailed in Table~\ref{tab:pass1_3ep}. The average token count (left panel) shows a slight decrease from the SFT model's baseline (Step 0), stabilizing at a somewhat lower level during REDI training.

The frequency of ``Wait" occurrences (center panel) exhibits a notable dynamic: a transient increase during early-to-mid REDI training, followed by a return to levels largely comparable with the SFT baseline. This pattern is mirrored by the normalized ``Wait" occurrences (right panel), which accounts for token length.

Overall, while REDI training leads to a modest reduction in average generation length, the model's propensity for explicit reflection, as indicated by ``Wait" counts, shows temporary fluctuations before largely realigning with the SFT base model's characteristics after the initial tuning phase.

\subsection{Case Study on Model Responses}

We examine model responses to AIME 2024 II Problem 3. The generation parameters were consistent with those used in our main evaluations. The problem is presented to the model as follows:

\begin{promptbox}{Problem Statement: AIME 2024 II Problem 3}
Solve the following math problem efficiently and clearly. The last line of your response should be of the following format: `Therefore, the final answer is: \$\textbackslash boxed\{\{ANSWER\}\}\$. I hope it is correct' (without quotes) where ANSWER is just the final number or expression that solves the problem. Think step by step before answering.\\\\
Find the number of ways to place a digit in each cell of a 2x3 grid so that the sum of the two numbers formed by reading left to right is $999$, and the sum of the three numbers formed by reading top to bottom is $99$. The grid below is an example of such an arrangement because $8+991=999$ and $9+9+81=99$.
\[\begin{array}{|c|c|c|} \hline 0 & 0 & 8 \\ \hline 9 & 9 & 1 \\ \hline \end{array}\]
\end{promptbox}
The ground truth answer is 45.

\subsubsection{Response from \texttt{Qwen-SFT-1.5B-3ep} (SFT Baseline)}
This model answered 0/4 attempts correctly for this problem. A representative incorrect response is:
\begin{tcblisting}{title=SFT Baseline Response (Incorrect),
                   breakable,
                   listing only,
                   listing options={style=mystyle}}
<think>
Okay, so I need to figure out how many ways there are to fill a 2x3 grid...
Let me denote the grid as follows:
Row 1: a b c
Row 2: d e f

The total sum of the row numbers is abc + def = 999.
(100a + 10b + c) + (100d + 10e + f) = 999
Which simplifies to:
100(a + d) + 10(b + e) + (c + f) = 999

The total sum of the column numbers is ad + be + cf = 99.
10a + d + 10b + e + 10c + f = 99
Which simplifies to:
10(a + b + c) + (d + e + f) = 99

Let me denote S1 = a + d, S2 = b + e, S3 = c + f
Then equation 1 becomes:
100*S1 + 10*S2 + S3 = 999

Equation 2 becomes:
10*(a + b + c) + (d + e + f) = 99
But a + b + c + d + e + f = S1 + S2 + S3
Therefore, substituting into equation 2:
10*(S1 + S2 + S3 - S3) + S3 = 99 [Error in reasoning: S1+S2+S3-S3 = S1+S2, not a+b+c]
Which simplifies to:
10*(S1 + S2) + S3 = 99

So equation 2 is 10*(S1 + S2) + S3 = 99
Equation 1 is 100*S1 + 10*S2 + S3 = 999

Let me write equation 2 as 10*S1 + 10*S2 + S3 = 99
Then subtract equation 2 from equation 1:
(100*S1 + 10*S2 + S3) - (10*S1 + 10*S2 + S3) = 999 - 99
90*S1 = 900 ⇒ S1 = 10

Then substitute back into equation 2: 10*10 + 10*S2 + S3 = 99
⇒ 100 + 10*S2 + S3 = 99 ⇒ 10*S2 + S3 = -1

But S2 and S3 are sums of digits, so they must be non-negative.
Therefore, 10*S2 + S3 = -1 is impossible. Therefore, there are no solutions.
[...]
Wait, let me check the example again. The example is:
Row 1: 0 0 8 → 8
Row 2: 9 9 1 → 991
Sum: 8 + 991 = 999
Columns:
First column: 0 9 → 09
Second column: 0 9 → 09
Third column: 8 1 → 81
Sum: 09 + 09 + 81 = 99
Yes, this is correct. So the example is a valid solution.
[...]
In the example, S1 = a + d = 0 + 9 = 9, S2 = b + e = 0 + 9 = 9, S3 = c + f = 8 + 1 = 9.
Then equation 1: 100*9 + 10*9 + 9 = 999. Correct.
Equation 2 [Model A's version]: 10*(S1 + S2) + S3 = 10*(9 + 9) + 9 = 189. Which is not equal to 99.
So equation 2 is not satisfied. Therefore, the equations are inconsistent.
[...]
Therefore, according to the equations, there are no solutions. Therefore, the number of ways is zero.

**Final Answer**
The number of ways is \boxed{0}.
</think>
[...]
Thus, the number of ways to place the digits in the grid is \(\boxed{0}\).
\end{tcblisting}
\xpar{Analysis of SFT Baseline Response:}
The SFT model correctly set up the initial two main equations based on row and column sums. However, it made a critical algebraic error when attempting to rewrite the second equation (sum of column numbers) in terms of intermediate sums \(S_1=a+d, S_2=b+e, S_3=c+f\). Specifically, it incorrectly assumed \(a+b+c = S_1+S_2\). This flawed transformation led to an inconsistent system of equations, resulting in the conclusion that \(10S_2+S_3=-1\), which is impossible for sums of digits. Consequently, it incorrectly answered 0. Even when checking the provided example, which contradicted its derived system, the model failed to identify its algebraic mistake and instead reinforced its belief in the inconsistency.
\subsubsection{Response from \texttt{Qwen-SFT-1.5B-3ep + REDI} ($\alpha=0.8$, Checkpoint 1400)}
This model, taken from an intermediate stage of REDI training (step 1400 out of 1661 total steps for 1 epoch), answered 2/4 attempts correctly for this problem. A representative correct response is:
\begin{tcblisting}{title=REDI-tuned Response (Correct),
                   breakable,
                   listing only,
                   listing options={style=mystyle}}
<think>
Okay, so I need to figure out how many ways there are to fill a 2x3 grid...
Let the grid be:
  a b c
  d e f
Row 1 sum: R1 = 100a + 10b + c
Row 2 sum: R2 = 100d + 10e + f
Column 1 sum: C1 = 10a + d
Column 2 sum: C2 = 10b + e
Column 3 sum: C3 = 10c + f
Given that R1 + R2 = 999 and C1 + C2 + C3 = 99.
So, first equation: 100(a + d) + 10(b + e) + (c + f) = 999
Second equation: (10a + d) + (10b + e) + (10c + f) = 99
[...]
Let X = a + d, Y = b + e, Z = c + f.
Row sum equation: 100X + 10Y + Z = 999
Column sum equation [initial incorrect attempt by the model]: 10X + Y + Z = 99
Subtracting the column sum equation from the row sum equation:
90X + 9Y = 900
Divide by 9: 10X + Y = 100
Then, substituting back into column sum equation:
100 + Z = 99 => Z = -1
Which is impossible. Therefore, there is a contradiction, meaning that there are
no solutions? But the problem states that there are such arrangements, so I must
have made a wrong assumption.
[...]
Wait, let's think about the addition of the two three-digit numbers R1 and R2 resulting in 999.
100(a + d) + 10(b + e) + (c + f) = 999
So, 100X + 10Y + Z = 999, where X = a + d, Y = b + e, Z = c + f.
[...] So X can be at most 9. [...] So X must be at least 9. Therefore, X=9.
Therefore, X=9, so 100*9 + 10Y + Z = 999 => 900 + 10Y + Z = 999 => 10Y + Z = 99.
Therefore, Y=9, Z=9. So:
X = a + d = 9
Y = b + e = 9
Z = c + f = 9
Therefore, S = a + b + c + d + e + f = (a + d) + (b + e) + (c + f) = X + Y + Z = 9 + 9 + 9 = 27
[...]
But also, C1 + C2 + C3 = 10(a + b + c) + (d + e + f) = 99
But a + b + c = 27 - (d + e + f). Let me denote D = d + e + f.
Therefore, C1 + C2 + C3 = 10*(27 - D) + D = 270 - 10D + D = 270 - 9D
But we know that C1 + C2 + C3 = 99. Therefore:
270 - 9D = 99 => -9D = -171 => D = 19
Therefore, d + e + f = 19
[...]
So, we have the following equations:
1. a + d = 9
2. b + e = 9
3. c + f = 9
4. d + e + f = 19
5. a + b + c = 8 (since 27-19=8)
6. All variables a, b, c, d, e, f are digits (0-9)
[...]
The number of non-negative integer solutions to d + e + f = 19 with d, e, f <= 9.
Using inclusion-exclusion principle.
[...]
First term: C(19 + 3 -1, 3 -1) = C(21, 2) = 210
Subtract the cases where at least one variable >=10. [...] 3*55=165.
Now, add back the cases where two variables >=10. [...] No solutions.
Therefore, total solutions=210 -165 +0 -0=45.
Therefore, 45 solutions. So there are 45 possible triples (d, e, f) that sum to 19.
But each such triple corresponds to a unique (a, b, c) since a=9 - d, b=9 - e, c=9 - f.
Therefore, the number of ways is 45.

But let's confirm with an example...

Therefore, the number of ways is 45. Therefore, the final answer is \boxed{45}.
**Final Answer**
\boxed{45}
</think>
[...]
Thus, the number of ways to place the digits in the grid is:
\[
\boxed{45}
\]
\end{tcblisting}

\begin{figure*}[ht]
    \centering
    \includegraphics[width=0.5\linewidth]{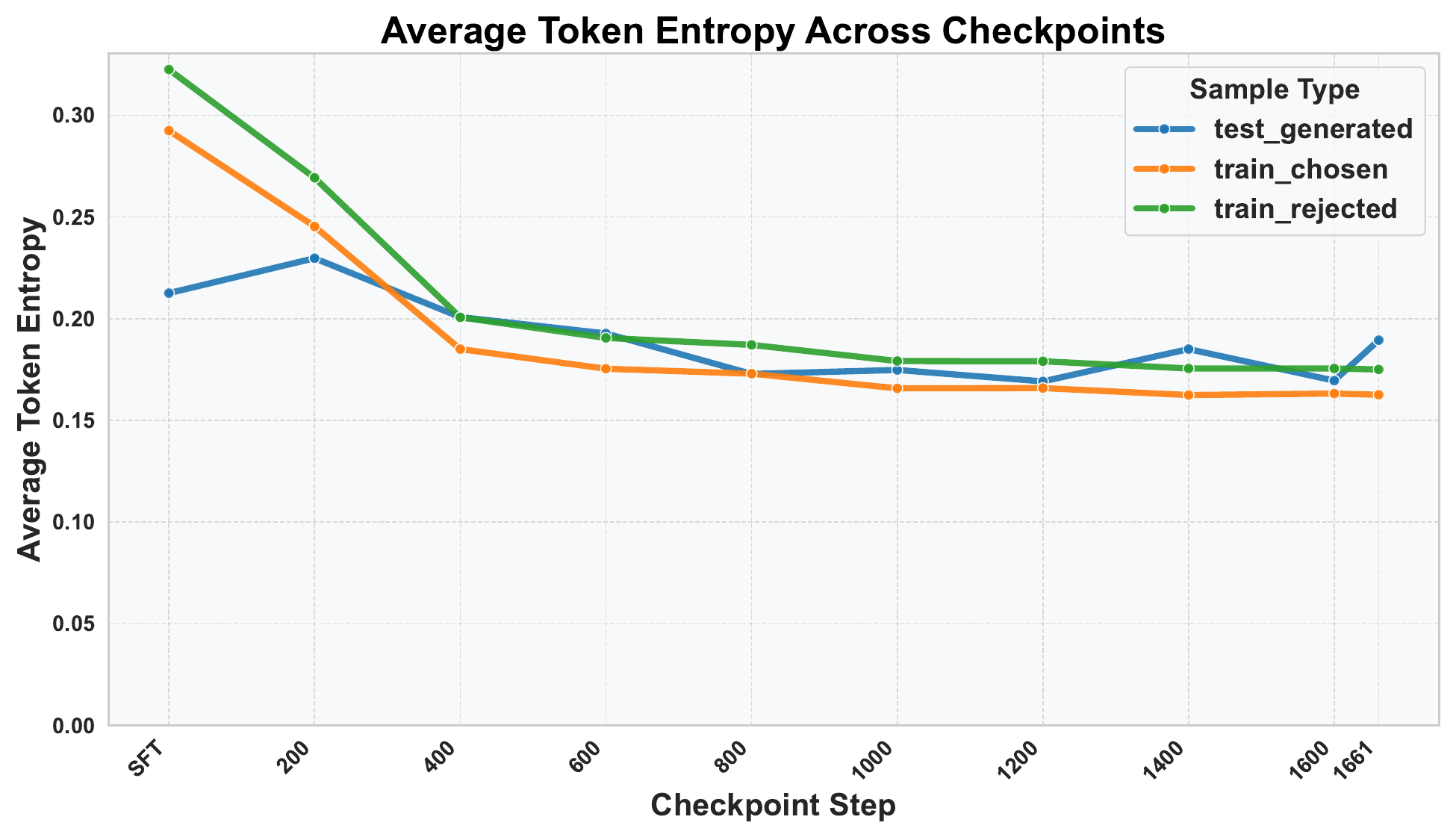}
    \caption{Average token entropy dynamics across training checkpoints for different sample types, for the REDI $\alpha=0.8$ run. ``SFT" denotes the initial model state (\texttt{Qwen-SFT-1.5B-3ep}). ``train\_chosen" and ``train\_rejected" refer to sequences from the preference dataset, while ``test\_generated" refers to sequences generated by the model on test prompts.}
    \label{fig:appendix_entropy_dynamics}
\end{figure*}

\begin{figure*}[ht]
    \centering
    \includegraphics[width=0.65\linewidth]{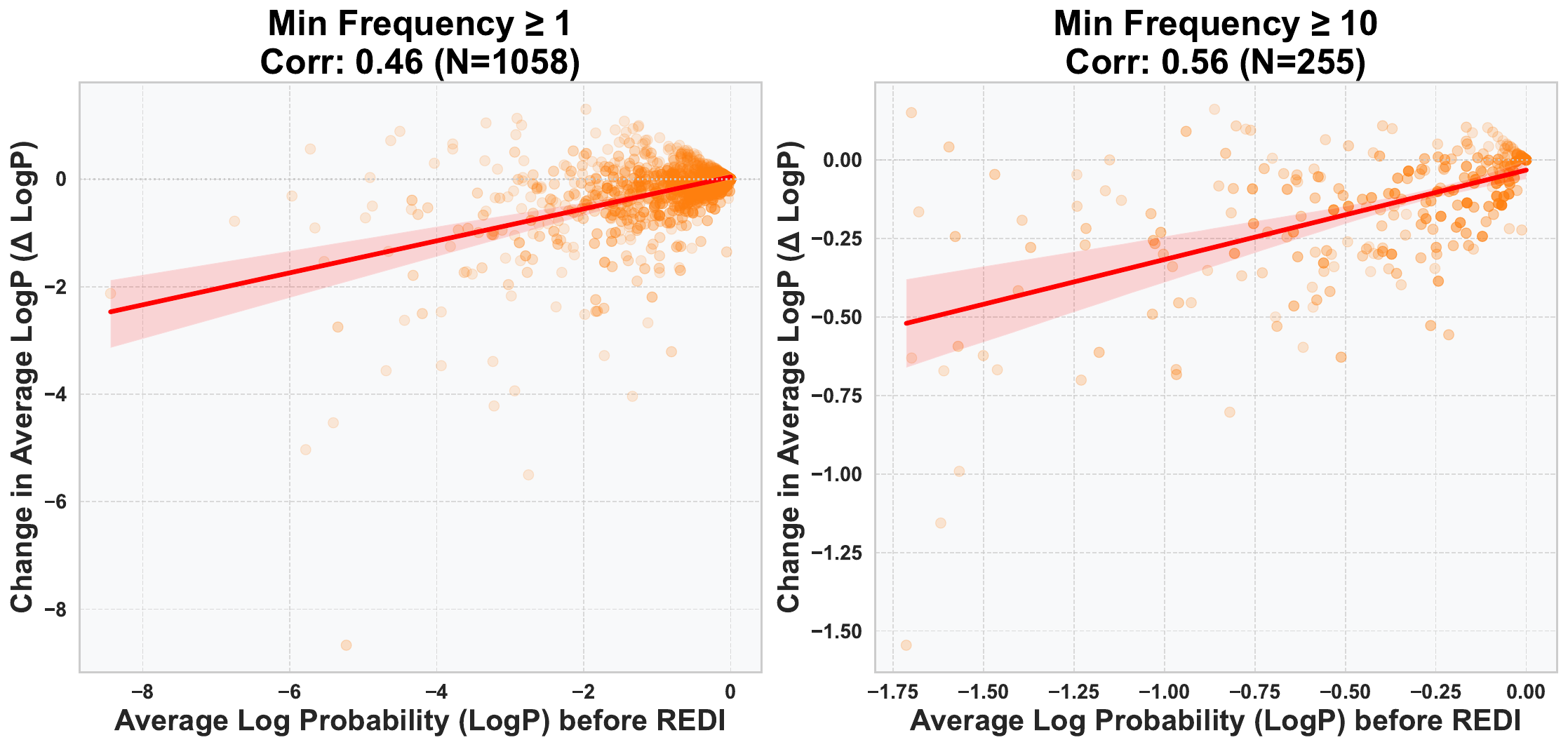}
    \caption{Correlation between the change in average token log-probability (LogP) after REDI training ($\Delta$ LogP) and the average token LogP before REDI training (i.e., in the \texttt{Qwen-SFT-1.5B-3ep} model). Analysis performed on tokens from chosen responses in the training data for the REDI $\alpha=0.8$ run. Left: Tokens with a minimum frequency $\ge 1$. Right: Tokens with a minimum frequency $\ge 10$. A positive correlation indicates that tokens with initially higher LogP tend to see their LogP increase (or decrease less), while tokens with initially lower LogP tend to experience a larger decrease.}
    \label{fig:appendix_logp_correlation}
\end{figure*}

\xpar{Analysis of REDI-tuned Response:}
The REDI-tuned model also correctly established the initial equations. Notably, it initially made a similar algebraic error when attempting to use intermediate sums, leading to a momentary contradiction (\(Z=-1\)). However, unlike the SFT baseline, this model demonstrated an enhanced ability to self-correct or find an alternative path. It revisited the first primary equation (\(100(a+d) + 10(b+e) + (c+f) = 999\)) and correctly deduced from the properties of digit sums that \(a+d=9, b+e=9, c+f=9\). This crucial insight allowed it to determine the total sum of all digits (\(27\)). Using this, along with the second primary equation (\(10(a+b+c) + (d+e+f) = 99\)), it correctly solved for the sum of digits in the top row (\(a+b+c=8\)) and bottom row (\(d+e+f=19\)). Finally, it correctly applied the Principle of Inclusion-Exclusion to count the number of ways to form \(d+e+f=19\) with digits, leading to the correct answer of 45.

This case study suggests that REDI training, by incorporating signals from both positive and negative reasoning traces, can enhance a model's ability to navigate complex problem-solving paths, including recovering from intermediate errors and identifying correct solution strategies, which might be less developed in models trained solely on positive examples.

\section{Additional Analysis on Training Dynamics}
\label{sec:appendix_training_dynamics}

To further investigate the training dynamics of REDI, particularly the observed phenomenon where both chosen ($y_w$) and rejected ($y_l$) log-probabilities (LogPS) can decline even in stable runs, we tracked additional statistics. The specific run analyzed here is our REDI configuration with $\alpha=0.8$, initialized from the \texttt{Qwen-SFT-1.5B-3ep}. We randomly sampled 4 prompts from the preference dataset ($\mathcal{D}_{\text{Pref}}$), supplemented by 1 test prompt from AIME24, 1 from AIME25, and 2 from MATH-500. For the training data samples, we analyzed the logits directly. For the test data samples, we first performed auto-regressive generation and then analyzed the logits of the generated sequences.

Figure~\ref{fig:appendix_entropy_dynamics} illustrates the average token entropy across training checkpoints for the REDI $\alpha=0.8$ run. We observe a rapid decrease in entropy from the initial SFT model state to approximately step 400-600 of REDI training. This decrease is more pronounced for sequences from the training data (``train\_chosen" and ``train\_rejected") compared to sequences generated by the model on test prompts (``test\_generated"). This suggests that the model becomes more confident (i.e., assigns sharper probability distributions) over tokens when conditioned on training sequences. After the initial drop, the entropy tends to stabilize or fluctuate slightly.

To understand where the model's probability mass is shifting, we analyzed the change in token log-probabilities (LogPs) relative to their initial values in the \texttt{Qwen-SFT-1.5B-3ep} model. Figure~\ref{fig:appendix_logp_correlation} displays this relationship for tokens in the chosen responses from the training data, specifically for the REDI $\alpha=0.8$ run. A moderate positive correlation is observed (Pearson correlation coefficient of 0.46 for tokens with frequency $\ge 1$, and 0.56 for tokens with frequency $\ge 10$). This positive correlation suggests that, on average:
\begin{itemize}[topsep=1pt,itemsep=2pt,leftmargin=20pt]
    \item Tokens for which the SFT model already had a high probability (higher initial LogP) tend to see their probabilities further increase or decrease less after REDI training.
    \item Conversely, tokens for which the SFT model had a low probability (lower initial LogP) tend to experience a more significant decrease in their probabilities.
\end{itemize}
In essence, REDI appears to amplify the model's existing tendencies to some extent, making it more confident about tokens it was already likely to predict and even less confident about tokens it was unlikely to predict. This behavior, where negative gradients might inadvertently suppress probabilities of tokens beyond the specific negative example, has been discussed in recent literature on off-policy preference optimization \citep{yan20253dpropertiesidentifyingchallengesdpo, ren2025learning, razin2025unintentional}.

Despite these complex dynamics and the observed shifts in token probabilities, our main results in Section~\ref{sec:results} indicate that as long as the training process avoids catastrophic collapse (which REDI's asymmetric weighting helps to prevent), the model achieves strong performance improvements on downstream reasoning tasks.

The analyses presented in this section are preliminary and offer initial insights into REDI's learning mechanisms for the specific $\alpha=0.8$ configuration. A more comprehensive understanding of how REDI precisely refines the model's internal representations and generation strategies warrants further investigation.
\section{Licenses for Models, Data, and Benchmarks}
\label{sec:licenses}

\paragraph{Scope.} This section summarizes the licenses for (i) base models we fine-tuned or referenced, and (ii) datasets/benchmarks we trained on or evaluated against. It is not legal advice; please review the upstream licenses directly before redistribution or commercial use.

\subsection*{Models}

\noindent\textbf{Qwen2.5-Math-1.5B.}
Official releases in the Qwen2.5-Math series (including the 1.5B variant) are distributed under the \emph{Apache License 2.0}.\footnote{\url{https://huggingface.co/Qwen/Qwen2.5-Math-1.5B}\;(\texttt{License: apache-2.0})}%
\footnote{Qwen team’s overview noting most Qwen2.5 models are Apache 2.0, with exceptions for some sizes: \url{https://qwenlm.github.io/blog/qwen2.5-llm/}.}

\noindent\textbf{Llama~3.2 (for comparisons/references).}
Meta’s Llama~3.2 family is covered by the \emph{Meta Llama 3.2 Community License}. It allows broad research and many commercial uses subject to the license terms (e.g., usage restrictions and acceptable use policy).\footnote{\url{https://ai.meta.com/resources/models-and-libraries/llama-downloads/}}\footnote{Model card mirror: \url{https://huggingface.co/meta-llama/Llama-3.2-1B}, see ``License'' section.}

\subsection*{Distilled / Training Data}

\noindent\textbf{OpenR1-Math-Raw (teacher-distilled traces).}
The \textit{OpenR1-Math-Raw} corpus we derive our SFT and preference pairs from is released under \emph{Apache License 2.0}.\footnote{\url{https://huggingface.co/datasets/QwenLM/OpenR1-Math-Raw}\;(\texttt{License: apache-2.0}).}

\subsection*{Evaluation Benchmarks}

\noindent\textbf{MATH and MATH-500.}
The original MATH dataset repository does not specify an explicit open-source license; treat problem text as copyrighted competition material and use under research/academic terms only.\footnote{\url{https://github.com/hendrycks/math}}
Some third-party MATH-500 mirrors label \texttt{apache-2.0}, but these are not the canonical license from the original authors; verify before redistribution.\footnote{\url{https://www.modelscope.cn/datasets/AI-ModelScope/MATH-500}.}

\noindent\textbf{AIME24 and AMC23 (MAA).}
AIME/AMC problems are copyrighted by the Mathematical Association of America (MAA). Public mirrors (e.g., AoPS) note the MAA copyright explicitly. Use for research/academic evaluation typically falls under fair use or with permission; redistribution may require permission from MAA.\footnote{MAA AMC overview: \url{https://maa.org/student-programs/amc/}.}\footnote{AoPS page with copyright notice: \url{https://artofproblemsolving.com/wiki/index.php/2024_AIME_I_Problems}.}

\noindent\textbf{Minerva (evaluation set).}
The Minerva paper reports results on collections of STEM problems compiled by Google; there is no separately packaged ``Minerva dataset'' with a permissive license in the original paper materials. If you reproduce those evaluations, ensure the source problems’ terms permit such use, and limit redistribution of copyrighted items.\footnote{Paper: \url{https://arxiv.org/abs/2206.14858}.}\footnote{Google blog: \url{https://research.google/blog/minerva-solving-quantitative-reasoning-problems-with-language-models/}.}

\noindent\textbf{OlympiadBench.}
The official OlympiadBench repository is released under the \emph{MIT License}. Note that it aggregates problems from Olympiad and exam sources; the repo’s code/data package is MIT-licensed, but underlying problem texts and images may carry third-party copyrights—follow any per-file notices.\footnote{GitHub (shows ``MIT license''): \url{https://github.com/OpenBMB/OlympiadBench}.}

\section{Statement on the Use of AI Assistants} AI assistants were utilized during the preparation of this manuscript for proofreading. The scope of their use was limited to improving grammar, style, and clarity. All core content, analysis, and conclusions presented in this work are the original contributions of the authors.

\section{Potential Harms}
The development of more powerful and efficient reasoning models, as presented in this work, has potential for misuse. Our method makes smaller, open-source models more capable, which lowers the barrier for malicious applications such as generating sophisticated disinformation, finding software vulnerabilities, or automating social engineering attacks. This accessibility bypasses the safeguards common to larger, API-gated models.

Furthermore, the distillation process can propagate subtle biases from the teacher model, and improved benchmark scores may mask underlying reliability issues. This poses significant risks if these models are deployed in high-stakes applications without rigorous human oversight. We advocate for continued research into AI safety and robust evaluation to ensure such technologies are developed and deployed responsibly.

\end{document}